\newcommand{\cmark}{\ding{51}}%
\newcommand{\xmark}{\ding{55}}%
\newcommand{\jianfei}[1]{\textcolor{red}{Jianfei: [#1]}}
\newcommand{\tp}[1]{\textcolor{purple}{tp: [#1]}}
\title{C-GAIL: Stabilizing Generative Adversarial Imitation Learning with Control Theory}
\theoremstyle{plain}
\newtheorem{theorem}{Theorem}[section]
\newtheorem{proposition}[theorem]{Proposition}
\newtheorem{lemma}[theorem]{Lemma}
\newtheorem{corollary}[theorem]{Corollary}
\theoremstyle{definition}
\newtheorem{definition}[theorem]{Definition}
\newtheorem{assumption}[theorem]{Assumption}
\theoremstyle{remark}
\author{%
  Tianjiao Luo$^1$, Tim Pearce$^2$, Huayu Chen$^1$, Jianfei Chen$^{1*}$, Jun Zhu$^{1*}$\\
 $^1$Dept. of Comp. Sci. and Tech., Institute for AI, Tsinghua-Bosch Joint ML Center,\\
THBI Lab, BNRist Center, Tsinghua University, Beijing 100084, China\\
$^2$Microsoft Research\\
  \texttt{\{luotj21, chenhuay21\}@mails.tsinghua.edu.cn} \\
  \texttt{ \{jianfeic, dcszj\}@tsinghua.edu.cn 
} \\
}
\begin{document}

\newcommand\blfootnote[1]{%
  \begingroup
  \renewcommand\thefootnote{}\footnote{#1}%
  \addtocounter{footnote}{-1}%
  \endgroup
}

\maketitle

\begin{abstract}
Generative Adversarial Imitation Learning (GAIL) provides a promising approach to training a generative policy to imitate a demonstrator.
It uses on-policy Reinforcement Learning (RL) to optimize a reward signal derived from an adversarial discriminator. 
However, optimizing GAIL is difficult in practise, with the training loss oscillating during training, slowing convergence. This optimization instability can prevent GAIL from finding a good policy, harming its final performance.
In this paper, we study GAIL's optimization from a control-theoretic perspective.  
We show that GAIL cannot converge to the desired equilibrium. In response, we analyze the training dynamics of GAIL in function space and design a novel controller that not only pushes GAIL to the desired equilibrium but also achieves \textit{asymptotic stability} in a simplified ``one-step'' setting. 
Going from theory to practice, we propose Controlled-GAIL (C-GAIL), which adds a differentiable regularization term on the GAIL objective to stabilize training. 
Empirically, the C-GAIL regularizer improves the training of various existing GAIL methods, including the popular GAIL-DAC, by speeding up the convergence, reducing the range of oscillation, and matching the expert distribution more closely.\blfootnote{* Corresponding authors.}





\end{abstract}


\section{Introduction}
Generative Adversarial Imitation Learning (GAIL) \citep{ho2016generative} aims to learn a decision-making policy in a sequential environment by imitating trajectories collected from an expert demonstrator. 
Inspired by Generative Adversarial Networks (GANs) \citep{goodfellow2014generative}, GAIL consists of a learned policy serving as the generator, and a discriminator  distinguishing expert trajectories from the generated ones. 
The learned policy is optimized by Reinforcement Learning (RL) algorithms with a reward signal derived from the discriminator.
This paradigm offers distinct advantages over other imitation learning strategies such as Inverse Reinforcement Learning (IRL), which requires an explicit model of the reward function \citep{ng2000algorithms}, and Behavior Cloning (BC), which suffers from a distribution mismatch during roll-outs \citep{codevilla2019exploring}. 

Meanwhile, GAIL does bring certain challenges. One key issue it inherits from GANs is \emph{instability} during training \citep{mescheder2018training}. GAIL produces a difficult minimax optimization problem, where the convergence of the discriminator and the policy generator towards their optimal points is not guaranteed in general. 
This problem manifests in practice as oscillating training curves and an inconsistency in matching the expert's performance (Fig.~\ref{fig:dac-return}). However, current empirical works \citep{fu2017learning, jena2021augmenting, xiao2019wasserstein, kostrikov2018discriminator, swamy2022minimax} of GAIL mostly focus on improving the sample efficiency and final return for learned policy, which do not resolve our problem of unstable training. On the other hand, existing theoretical works \citep{chen2020computation,cai2019global, zhang2020generative, guan2021will} on GAIL's convergence are based on strong assumptions and do not yield a practical algorithm to stabilize training.

In this paper, we study GAIL's training stability using control theory. We describe how the generative policy and discriminator evolve over training at timestep $t$ with a dynamical system of differential equations. We study whether the system can converge to the \emph{desired state}  where the generator perfectly matches with the expert policy and the discriminator cannot distinguish generated from expert trajectories. This surprisingly reveals that the desirable state is not a equilibrium of the system, indicating that existing algorithms do not converge to the expert policy, even with unlimited data and model capacity.
In response, we study a ``one-step GAIL'' setting, and design a controller that does create an equilibrium at the desired state. We theoretically prove that this controller achieves \textit{asymptotic stability} around this desired state, which means that if initialized within a sufficiently close radius, the generator and discriminator will indeed converge to it.


Motivated by our theoretical analysis, we propose C-GAIL, which incorporates a pragmatic controller that can be added as a regularization term to the loss function to stabilize training in practice. 
Empirically we find that our method speeds up the convergence, reduces the range of oscillation in the return curves (shown on Fig. \ref{fig:dac-return}), and matches the expert's distribution more closely on GAIL-DAC and other imitation learning methods for a suite of MuJoCo control tasks.

\begin{figure}[t!]
\centering
\begin{minipage}{0.327\linewidth}
	\centering
        {\footnotesize Half-Cheetah} 
	\includegraphics[width=\linewidth]{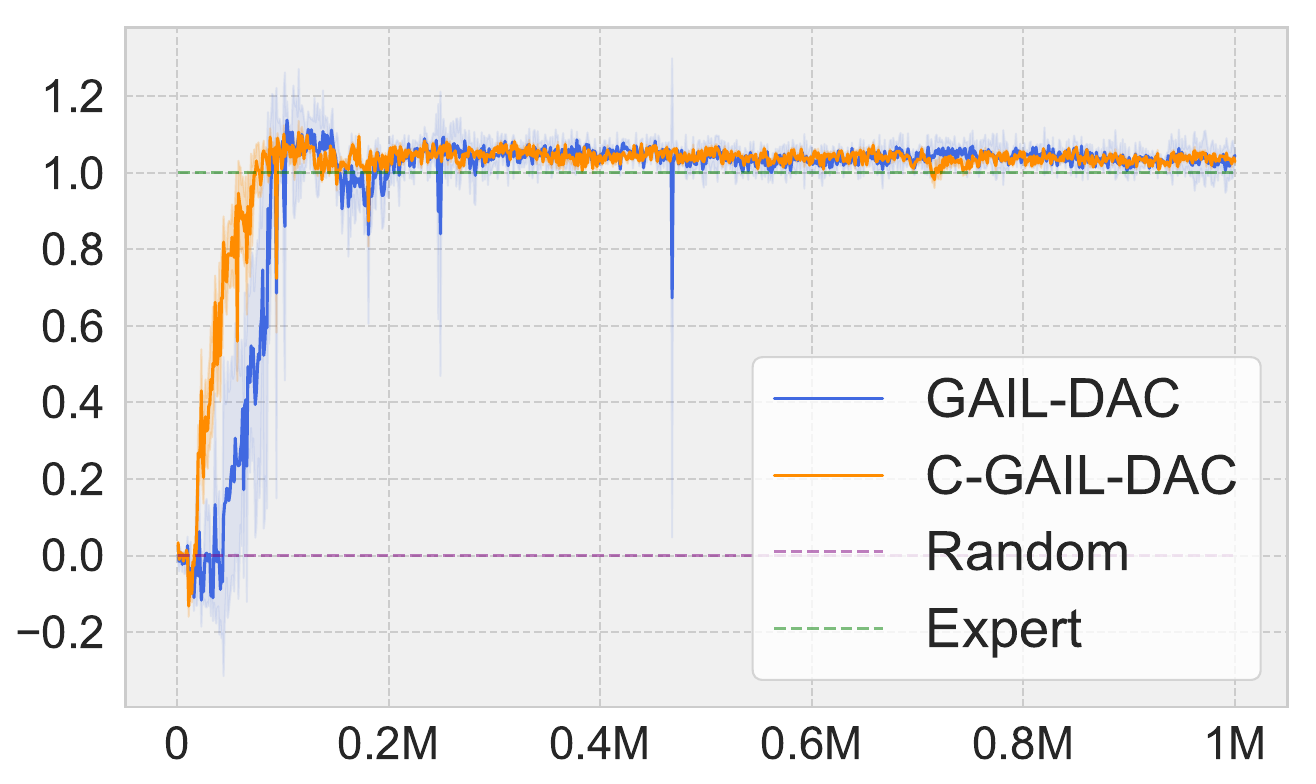}
\end{minipage} 
\begin{minipage}{0.327\linewidth}
	\centering
 {\footnotesize Walker2d}
	\includegraphics[width=\linewidth]{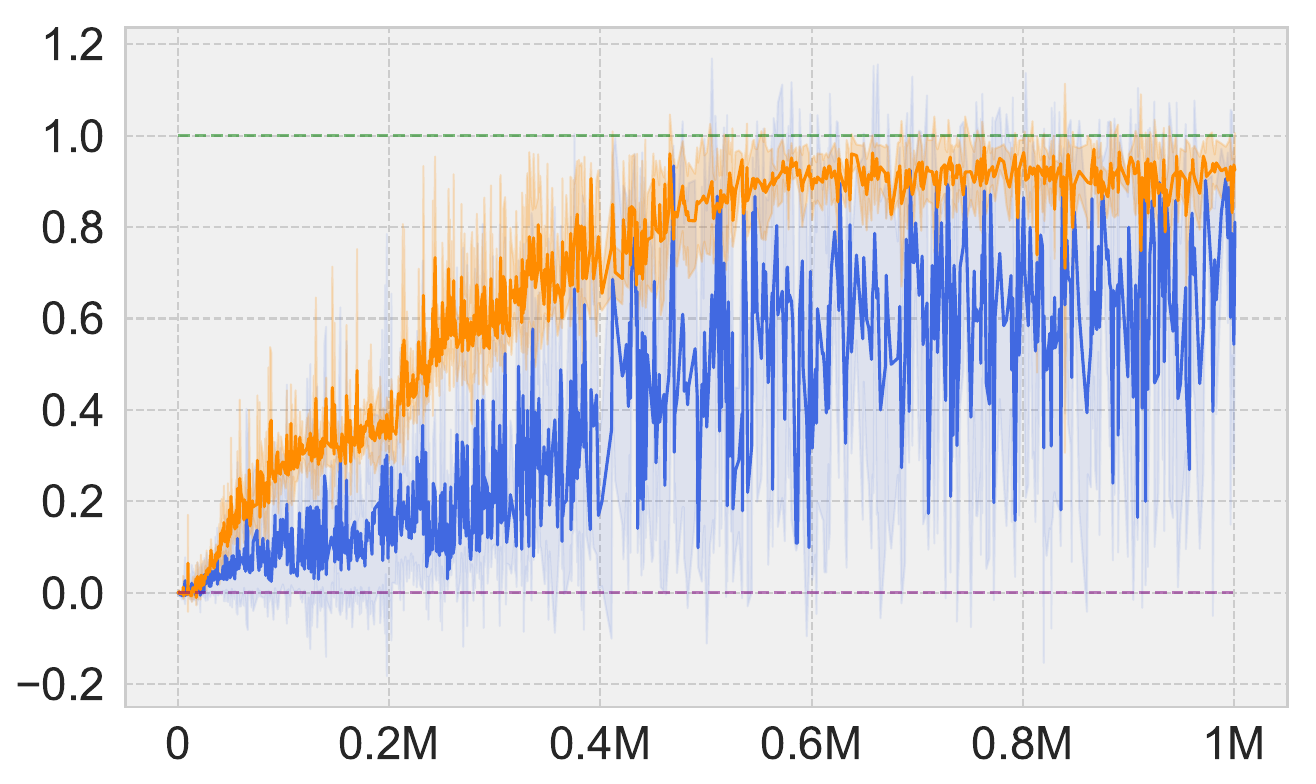}
\end{minipage} 
\begin{minipage}{0.327\linewidth}
	\centering
 {\footnotesize Reacher}
	\includegraphics[width=\linewidth]{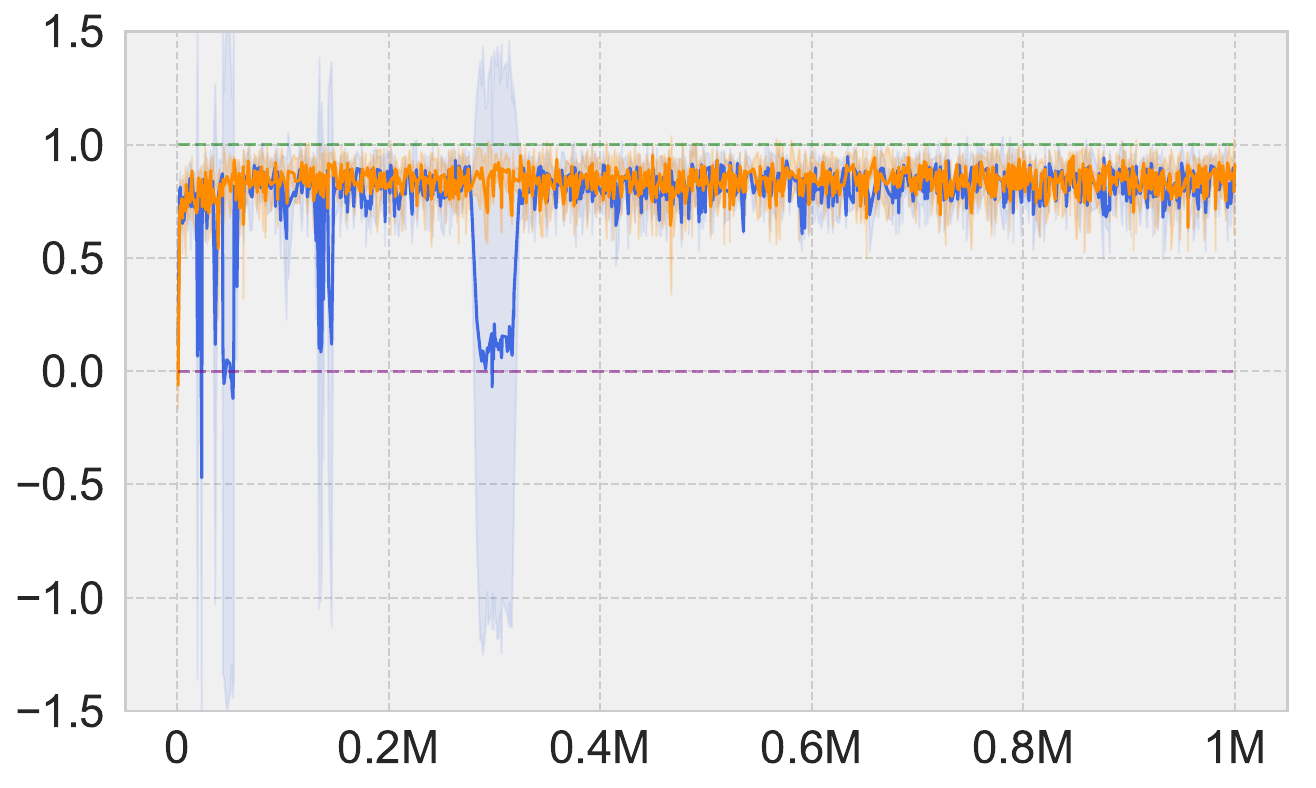}
\end{minipage} \\

\begin{minipage}{0.327\linewidth}
	\centering
  {\footnotesize Ant}
	\includegraphics[width=\linewidth]{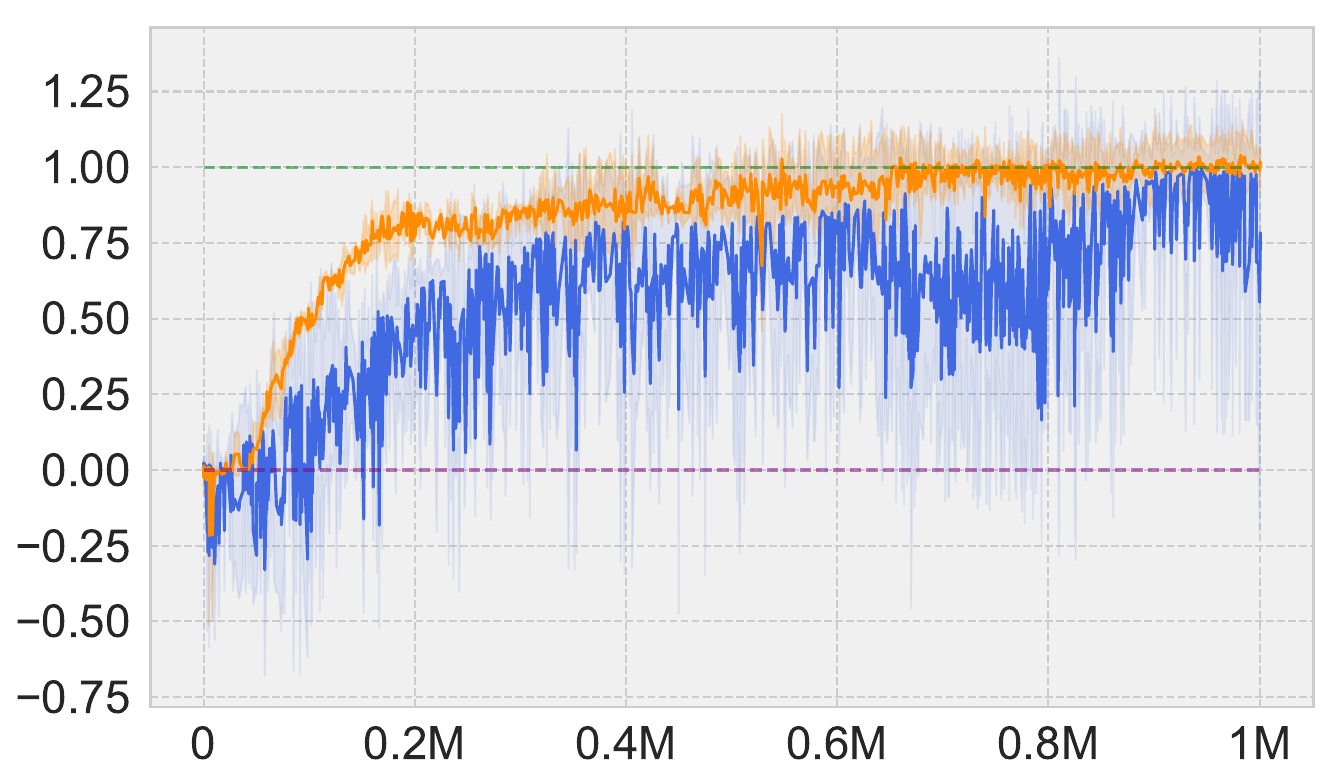}
\end{minipage} 
\begin{minipage}{0.327\linewidth}
	\centering
  {\footnotesize Hopper}
	\includegraphics[width=\linewidth]{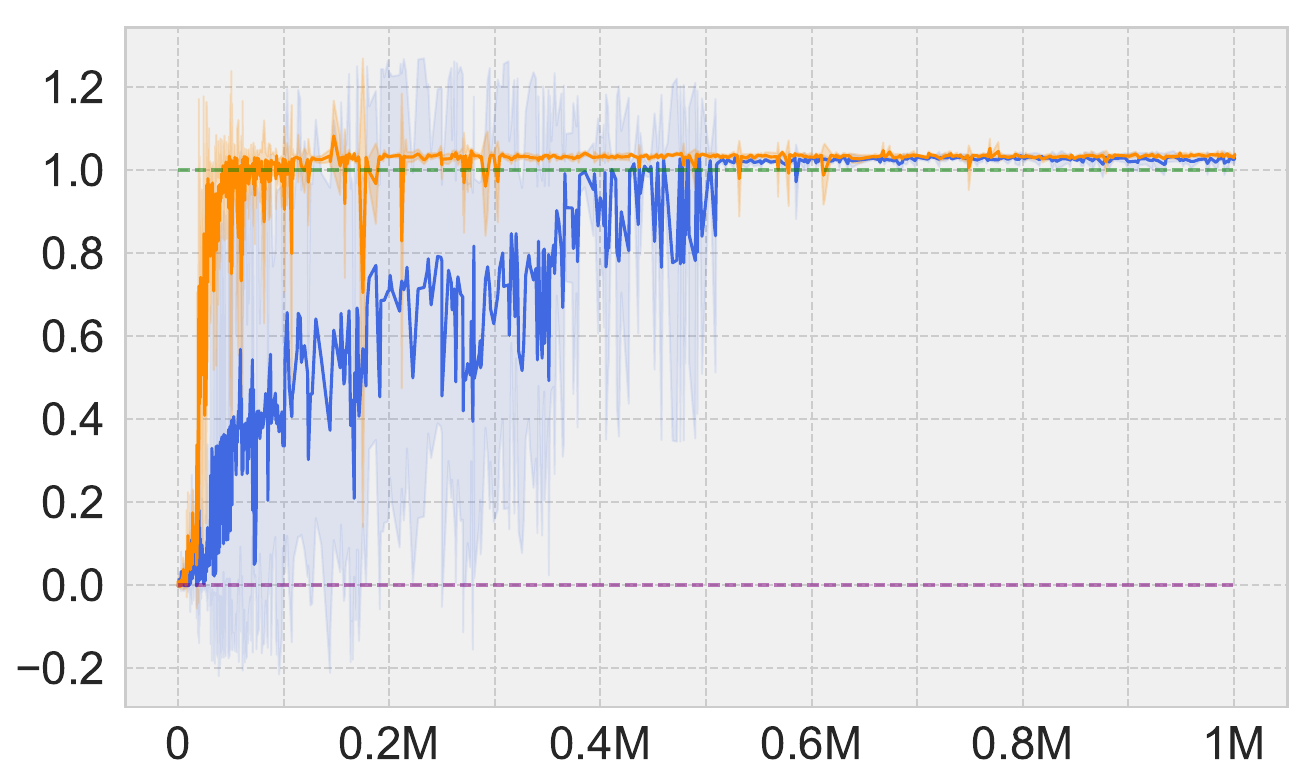}
\end{minipage} 
 \\

  \caption{Normalized return curves for controlled GAIL-DAC with four expert demonstrations on five MuJoCo environments averaged over five random seeds. The x-axis represents the number of gradient step updates in millions and the y-axis represents the normalized environment reward, where 1 stands for the expert policy return and 0 stands for the random policy return
  }
    \label{fig:dac-return}
    \vspace{-0.1in}
\end{figure}

\subsection{Related Work} \label{relatedwork}

\textbf{Adversarial imitation learning.}
Inspired by GANs and inverse RL, Adversarial Imitation Learning (AIL) has emerged as a popular technique to learn from demonstrations.
GAIL~\citep{ho2016generative} formulated the problem as matching an occupancy measure under the maximum entropy RL framework, with a discriminator providing the policy reward signal, bypassing the need to recover the expert's reward function. Several advancements were subsequently proposed to enhance performance and stability.
For instance, AIRL~\citep{fu2017learning} replaced the Shannon-Jensen divergence of GAIL by KL divergence.
\citet{baram2017end} explored combining GAIL with model-based reinforcement learning.
DAC~\citep{kostrikov2018discriminator} utilized a replay buffer to remove the need for importance sampling and address the issue of absorbing states.
Other empirical works such as \citep{jena2021augmenting, xiao2019wasserstein, kostrikov2018discriminator, swamy2022minimax, rajaraman2020toward} have been done to improve the sample efficiency and final return of GAIL. In contrast, our work focuses on the orthogonal direction of training stability. 

Meanwhile, the convergence behaviors of AIL have also been investigated theoretically.
\citet{chen2020computation} proved that GAIL convergences to a stationary point (not necessarily the desired state). 
The convergence to the desired state has only been established under strong assumptions such as  i.i.d. samples and linear MDP \cite{cai2019global,zhang2020generative} and strongly concave objective functions \cite{guan2021will}. However, existing theory has not analyzed the convergence behavior to the desired state in a general setting, and has so far not presented practically useful algorithms to improve GAIL's convergence.
Our analysis in Sec.~\ref{sec:notconverge} show that GAIL actually \emph{cannot} converge to the desired state under general settings. Additionally, our proposed controller achieves not only a theoretical convergence guarantee, but also empirical improvements in terms of convergence speed and range of oscillation.

\textbf{Control theory in GANs.} 
Control theory has recently emerged as a promising technique for studying the convergence of GANs.
\citet{xu2020understanding} designed a linear controller which offers GANs local stability. 
\citet{luo2023stabilizing} utilized a Brownian motion controller which was shown to offer GANs global exponential stability. 
However, for GAIL, the policy generator involves an MDP transition, which results in a much more complicated dynamical system induced by a policy acting in an MDP rather than a static data generating distribution. 
Prior theoretical analysis and controllers are therefore inapplicable. We  adopt different analysis and controlling techniques, to present new stability guarantee, controller, and theoretical results for the different dynamical system of GAIL.

\section{Preliminaries}  \label{preliminary}
We start by formally introducing our problem setting, as well as necessary definitions and theorems relating to the stability of dynamical systems represented by Ordinary Differential Equations (ODEs).


\subsection{Problem Setting} \label{setting}
Consider a Markov Decision Process (MDP), described by the tuple $\langle \mathcal{S}, \mathcal{A}, \mathcal{P}, r, p_0, \gamma \rangle$, where $\mathcal{S}$ is the state space, $\mathcal{A}$ is the action space, $\mathcal{P}(s'|s, a)$ is the transition probability function, $r(s, a)$ is the reward function, $p_0$ is the probability distribution of the initial state $s_0$, and $\gamma \in [0,1]$ is the discount factor. 
We work on the $\gamma$-discounted infinite horizon setting, and define the expectation with respect to a policy $\pi \in \Pi$ as the (discounted) expectation over the trajectory it generates. For some arbitrary function $g$ we have $\mathbb{E}_\pi[g(s, a)] \triangleq \mathbb{E}[\Sigma_{n=0}^\infty \gamma^n g(s_n,a_n)] $, where $a_n \sim \pi(a_n | s_n)$, $s_0 \sim p_0$, $s_{n+1} \sim \mathcal{P}(s_{n+1} | {s_n, a_n})$. 
Note that we use $n$ to represent the \textbf{environment timestep}, reserving $t$ to denote the \textbf{training timestep} of GAIL. For a policy $\pi \in \Pi$, We define its (unnormalized) state occupancy $\rho_{\pi}(s) =  \sum_{n=0}^{\infty} \gamma^n P(s_n=s|\pi)$. We denote $Q^{\pi}(s, a) = \mathbb{E}_{\pi}[\log D(\bar{s},\bar{a}) + \lambda \log \pi(\bar{a}|\bar{s})| s_0 = s, a_0 = a]$ and the advantage function $A^{\pi}(s,a)= Q^{\pi}(s,a) - \mathbb{E}_{\pi}[Q^{\pi}(s,a)]$. We assume the setting where we are given a dataset of trajectories $\tau_E$ consisting of state-action tuples, collected from an expert policy $\pi_E$. We assume access to interact in the environment in order to learn a policy $\pi$, but do not make use of any external reward signal (except during evaluation).


\subsection{Dynamical Systems and Control Theory}\label{sec:dynamical-systems}

In this paper, we consider dynamical systems represented by an ODE of the form
\begin{equation}
    \frac{d x(t)}{dt} = f(x(t)) \label{dynamic_system},
\end{equation}
where $x$ represents some property of the system, $t$ refers to the timestep of the system and $f$ is a function.
The necessary condition for a solution trajectory $\{x(t)\}_{t\geq0}$ converging to some steady state value is the existence of an `equilibrium'.
\begin{definition} \label{eqdef} \textbf{(Equilibrium)} \citep{ince1956ordinary}
    A point $\Bar{x}$ is an \textit{equilibrium} of system (\ref{dynamic_system}) if $f(\Bar{x}) = 0$. Such an equilibrium is also called a \textit{fixed point}, \textit{critical point}, or \textit{steady state}.
\end{definition}
\vspace{-0.05in}
Note that a dynamical system is unable to converge if
    an equilibrium does not exist.
A second important property of dynamical systems is `stability'. The stability of a dynamical system can be described with Lyapunov stability criteria. More formally, suppose $\{x(t)\}_{t \geq 0}$ is a solution trajectory of the above system (\ref{dynamic_system}) with equilibrium $\Bar{x}$, we define two types of stability. 

\begin{definition} \label{def_stable_lyapunov}
\textbf{(Lyapunov Stability)} \citep{glendinning1994stability}
System (\ref{dynamic_system}) is \textit{Lyapunov Stable} if given any $\epsilon > 0$, there exists a $\delta > 0$ such that whenever $\lVert x(0) - \Bar{x} \rVert \leq \delta$, we have $\lVert x(t) - \Bar{x} \rVert < \epsilon $ for $0 \leq t \leq \infty$.
\end{definition}

\begin{definition} \label{def_stable_asym}
\textbf{(Asymptotic Stability)} \citep{glendinning1994stability} System (\ref{dynamic_system}) is \textit{asymptotic stable} if it is Lyapunov stable, and there exists a $\delta > 0$ such that whenever $\lVert x(0) - \Bar{x} \rVert \leq \delta$, we have $\lim_{t\to\infty} \lVert x(t) - \Bar{x} \rVert = 0$.
\end{definition}
\vspace{-0.05in}
Note that a dynamical system can be Lyapnuov stable but not asymptotic stable. However, every asymptotic stable dynamical system is Lyapnuov stable.

The field of control theory has studied how to drive dynamical systems to desired states. This can be achieved through the addition of a `controller' to allow influence over the dynamical system's evolution, for example creating an equilibrium at some desired state, and making the dynamical system stable around it.

\begin{definition} \textbf{(Controller)} \citep{brogan1991modern}
    A \textit{controller} of a dynamical system is a function $u(t)$ such that
    \begin{equation}
        \frac{d x(t)}{dt} = f(x(t)) +u(t) \label{dynamic_control} .
    \end{equation}
\end{definition}
The equilibrium and stability criteria introduced for dynamical system (\ref{dynamic_system}), equally apply to this controlled dynamical system (\ref{dynamic_control}). In order to analyze the stability of a controller $u(t)$ of the controlled dynamical system given an equilibrium $\Bar{x}$, the following result will be useful.


\begin{theorem} \label{stabletheorem}\textbf{(Principle of Linearized Stability)} \citep{la1976stability}
    A controlled dynamical system (\ref{dynamic_control}) with equilibrium $\Bar{x}$ is asymptotically stable if all eigenvalues of $\mathbb{J}(f(\Bar{x}) + u(t))$ have negative real parts, where $\mathbb{J} (f(\Bar{x}) + u(t))$ represents the Jacobian of $f(x(t)) + u(t)$ evaluated at $\Bar{x}$.
\end{theorem}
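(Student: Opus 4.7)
The plan is to reduce the nonlinear stability question to the behavior of the linearized system at $\bar{x}$, and then transfer linear asymptotic stability to the full system through a Lyapunov function argument. Throughout I interpret $u(t)$ as being (possibly implicitly) a state-feedback of $x$ near the equilibrium so that $g(x) := f(x) + u(t)$ has a well-defined Jacobian at $\bar{x}$; I would note this reading before beginning, since the theorem as stated speaks of $\mathbb{J}(f(\bar{x}) + u(t))$.

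First I would translate coordinates. Set $y(t) = x(t) - \bar{x}$ and Taylor-expand $g$ at the equilibrium. Using $g(\bar{x}) = 0$, the system becomes
\begin{equation*}
\dot{y}(t) = A\, y(t) + R(y(t)), \qquad A := \mathbb{J}(g(\bar{x})), \qquad \lVert R(y) \rVert = o(\lVert y \rVert) \text{ as } y \to 0.
\end{equation*}
The hypothesis is that every eigenvalue of $A$ has negative real part, so the pure linear system $\dot{y} = A y$ is exponentially stable, and by the classical Lyapunov matrix lemma there exists a symmetric positive definite $P$ solving $A^{\top} P + P A = -I$.

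Second, I would use $V(y) := y^{\top} P y$ as a candidate Lyapunov function. Since $P$ is positive definite, $V$ is positive definite and radially unbounded. Differentiating along solutions and substituting the Lyapunov equation,
\begin{equation*}
\dot{V}(y) = y^{\top}(A^{\top} P + P A) y + 2\, y^{\top} P R(y) = -\lVert y \rVert^{2} + 2\, y^{\top} P R(y).
\end{equation*}
From $\lVert R(y) \rVert = o(\lVert y \rVert)$, for any $\varepsilon > 0$ there is a neighborhood of the origin on which $\lVert R(y) \rVert \leq \varepsilon \lVert y \rVert$. Choosing $\varepsilon < 1/(4 \lVert P \rVert)$ yields $\dot{V}(y) \leq -\tfrac{1}{2}\lVert y \rVert^{2}$ on a ball $B_{\delta}(0)$. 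Applying the standard Lyapunov stability theorem (positive definite $V$, negative definite $\dot{V}$ in a neighborhood) gives both Lyapunov stability and $y(t) \to 0$, i.e.\ $x(t) \to \bar{x}$, whenever the initial condition lies in $B_{\delta}(0)$; this is precisely Definition~\ref{def_stable_asym}.

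The main obstacle I expect is controlling the remainder term $R(y)$ uniformly enough to dominate it by the negative linear part. The argument hinges on the quantitative bound $2 \lVert P \rVert \cdot \lVert R(y) \rVert \leq \tfrac{1}{2} \lVert y \rVert$ on a sufficiently small ball, which requires $R$ to be at least $C^{1}$ with $R(0) = 0$ and $\mathbb{J}(R)(0) = 0$; I would state this smoothness assumption explicitly. A secondary subtlety, should $u(t)$ genuinely depend on $t$ rather than on $x$, is that the system is non-autonomous and one needs a time-varying extension of the Lyapunov argument (e.g.\ a uniform-in-$t$ bound on the remainder); in the paper's usage $u$ is constructed as a feedback law, so this reduces to the autonomous case treated above.
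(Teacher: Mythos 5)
The paper does not actually prove this statement: it is quoted as a classical result (Lyapunov's indirect method / principle of linearized stability) with a citation to LaSalle, and is then only \emph{used} later, via Corollary~\ref{coro}, in the proof of Theorem~\ref{thm_asym_stable}. Your argument is the standard textbook proof of that classical result and it is essentially correct: translate to $y=x-\bar{x}$, split the closed-loop vector field into its linearization $A$ plus an $o(\lVert y\rVert)$ remainder, solve the Lyapunov equation $A^{\top}P+PA=-I$ (possible exactly because $A$ is Hurwitz), and show $\dot V\leq-\tfrac12\lVert y\rVert^{2}$ on a small ball so that Definition~\ref{def_stable_asym} is met. The two caveats you flag are the right ones and are consistent with how the theorem is deployed in the paper: the notation $\mathbb{J}(f(\bar{x})+u(t))$ only makes sense once $u$ is read as a state feedback, and indeed the controllers actually constructed in Eq.~(\ref{eq_u1})--(\ref{eq_u2}) are functions of $x(t)$ and $y(t)$, so the closed-loop system is autonomous and your autonomous Lyapunov argument applies; likewise the $o(\lVert y\rVert)$ bound on the remainder is available because the closed-loop right-hand side in Eq.~(\ref{eqxu})--(\ref{eqyu}) is smooth in a neighborhood of $(\tfrac12,E)$ (away from $x\in\{0,1\}$ and $y=0$), so differentiability at the equilibrium is not an extra hypothesis in the paper's setting. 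One small sharpening: you do not need $R\in C^{1}$ with vanishing Jacobian at the origin as a separate assumption --- differentiability of the closed-loop field at $\bar{x}$ gives the $o(\lVert y\rVert)$ estimate directly, and local Lipschitzness (which $C^{1}$ of $g$ provides) is what you need for well-posedness of solutions. So your proposal supplies a correct, self-contained justification for a result the paper takes off the shelf, by the same route a reference such as LaSalle would use.
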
 

\begin{corollary} \label{coro}
    If $\mathbb{J}(f(\Bar{x}) +u(t))$ has positive determinant and negative trace, all its eigenvalues have negative real parts and the system is asymptotically stable. 
   
\end{corollary}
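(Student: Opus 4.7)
The plan is to reduce Corollary \ref{coro} to a direct computation on the characteristic polynomial of the (here implicitly $2\times 2$) Jacobian $A := \mathbb{J}(f(\bar{x}) + u(t))$, and then apply Theorem \ref{stabletheorem} (Principle of Linearized Stability) to conclude asymptotic stability. The corollary is essentially a convenient criterion specialized to the two-dimensional setting, which matches the dynamical system we will eventually write down for the generator--discriminator pair.

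First, I would write the characteristic polynomial of $A$ as
\begin{equation*}
\lambda^2 - \mathrm{tr}(A)\,\lambda + \det(A) = 0,
\end{equation*}
and invoke the quadratic formula to obtain
\begin{equation*}
\lambda_{\pm} = \frac{\mathrm{tr}(A) \pm \sqrt{\mathrm{tr}(A)^2 - 4\det(A)}}{2}.
\end{equation*}
Next I would split on the sign of the discriminant $\Delta := \mathrm{tr}(A)^2 - 4\det(A)$. If $\Delta < 0$, the eigenvalues are complex conjugates with common real part $\mathrm{tr}(A)/2$, which is strictly negative by hypothesis, so $\mathrm{Re}(\lambda_{\pm}) < 0$. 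If $\Delta \geq 0$, both eigenvalues are real and satisfy $\lambda_+ + \lambda_- = \mathrm{tr}(A) < 0$ and $\lambda_+\lambda_- = \det(A) > 0$; the second condition forces them to have the same sign, and the first forces that sign to be negative, so again $\lambda_\pm < 0$.

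Having shown every eigenvalue of $A$ has a strictly negative real part, I would then apply Theorem \ref{stabletheorem} directly to conclude that the controlled system (\ref{dynamic_control}) is asymptotically stable around $\bar{x}$. I do not anticipate a serious obstacle: the argument is a one-paragraph linear-algebra fact plus an appeal to the cited theorem. The only subtle point worth flagging in the write-up is the dimensional scope: the trace/determinant characterization of eigenvalue signs is clean in two dimensions but fails in higher dimensions (where one would need the full Routh--Hurwitz criterion). Since the later analysis of GAIL's dynamics reduces to a two-dimensional Jacobian in the relevant function-space coordinates, this specialization is exactly what we need, and I would note this briefly so the reader understands the intended scope of the corollary.
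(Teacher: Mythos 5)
Your argument is correct and is precisely the standard reasoning this corollary rests on: the paper itself states the corollary without proof, treating it as a textbook fact, and only invokes it in Appendix \ref{appB} after computing the determinant and trace of the $2\times 2$ Jacobian of the controlled one-step system, so your characteristic-polynomial case split plus the appeal to Theorem \ref{stabletheorem} supplies exactly the missing justification. Your scope remark is also apt: as literally stated the trace--determinant criterion fails in dimension $n\geq 3$ (e.g.\ eigenvalues $1,-1,-2$ give positive determinant and negative trace), and the implicit $2\times 2$ restriction you flag is indeed the setting in which the paper uses the corollary.
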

\section{Analyzing GAIL as a Dynamical System} \label{sec:dynamic}
In this section, we study the training stability of GAIL through the lens of control theory. We derive the differential equations governing the training process of GAIL, framing it as a dynamical system. Then, we analyze the convergence of GAIL and find that it cannot converge to desired equilibrium due to the entropy term. For simplicity, we limit the theoretical analysis to the original GAIL\citep{ho2016generative} among many variants \citep{fu2017learning, jena2021augmenting, xiao2019wasserstein, kostrikov2018discriminator, swamy2022minimax}, while the controller proposed in the next section is general.

\subsection{GAIL Dynamics}
GAIL consists of a learned generative policy $\pi_\theta : \mathcal{S} \rightarrow \mathcal{A}$ and a discriminator $D_\omega : \mathcal{S} \times \mathcal{A} \rightarrow (0,1)$. The discriminator estimates the probability that an input state-action pair is from the expert policy, rather than the learned policy.
GAIL alternatively updates the policy and discriminator parameters, $\theta$ and $\omega$. (The parameter subscripts are subsequently dropped for clarity.)
The GAIL objective \citep{ho2016generative} is $
\mathbb{E}_\pi[\log (D(s, a))]+\mathbb{E}_{\pi_E}[\log (1-D(s, a))]-\lambda H(\pi)$, where $\pi_E$ is the expert demonstrator policy, $\pi$ is the learned policy, and $H(\pi) \equiv \mathbb{E}_\pi [-\log \pi(a|s)]$ is its entropy. Respectively, the objective functions for the discriminator and policy (to be maximized and minimized respectively) are,
\begin{equation}
\begin{aligned} \label{obj1}
 V_D(D,\pi) &=  \mathbb{E}_\pi [\log D(s,a)] + \mathbb{E}_{\pi_E}[\log(1-D(s,a))] \\
 V_\pi(D,\pi) &=  \mathbb{E}_\pi[\log D(s,a)] - \lambda \mathbb{E}_\pi[-\log\pi (a\vert s)].
\end{aligned}
\end{equation}
To describe GAIL as a dynamical system, we express how $\pi$ and $D$ evolve during training. For the analysis to be tractable, we study the training dynamics from a variational perspective, by directly considering the optimization of $\pi$ and $D$ in their respective \textit{function spaces}. This approach has been used in other theoretical deep learning works~\cite{ja2018,goodfellow2014generative,mescheder2018training} to avoid complications of the parameter space. 

We start by considering optimizing Eq.~(\ref{obj1}) with functional gradient descent with discrete iterations indexed by $m$: 
    $D_{m+1}(s, a)=  D_{m}(s, a) + \beta  \frac{\partial V_D(D_m, \pi_m)}{\partial D_m(s, a)}$, and $ \pi_{m+1}(a|s)=  \pi_{m}(a|s) - \beta  \frac{\partial V_\pi(D_m, \pi_m)}{\partial \pi_m(a|s)}$,
 where $\beta$ is the learning rate, $m$ the discrete iteration number, $\frac{\partial V_D(D_m, \pi_m)}{\partial D_m(s, a)}$ (similarly for $\frac{\partial V_\pi(D_m, \pi_m)}{\partial \pi_m(a|s)} $) is the \textit{functional derivative} \citep{lafontaine2015introduction} defined via $\partial V_D(D_m, \pi_m) = \int  \frac{\partial V_D(D_m, \pi_m)}{\partial D_m(s, a)} \partial D_m (s, a) \; ds \; da$, which implies the total change in $V_D$ upon variation of function $D_m$ is a linear superposition \citep{greiner2013field} of the local changes summed over the whole range of $(s,a)$ value pairs.

We then consider the limit $\beta\rightarrow 0$, where discrete dynamics become continuous (`gradient flow') $
\frac{d D_{t}(s, a)} {dt} = \frac{\partial V_D(D_t, \pi_t)}{\partial D_t(s, a)} ,$ and $
\frac{d \pi_{t}(a|s)}{ dt} = \frac{-\partial V_\pi(D_t, \pi_t)}{\partial \pi_t( a|s)}$. 
Formally, we consider the evolution of the discriminator function $D_t: \mathcal S\times \mathcal A\rightarrow \mathbb R$ and the policy generator $\pi_t: \mathcal{S} \rightarrow \mathcal{A}$ over continuous time $t$ rather than discrete time $m$. We derive the training dynamic of GAIL in the following theorem. 

\begin{theorem}
    The training dynamic of GAIL takes the form (detailed proof in Appendix Lemma \ref{lemma2})
    \begin{gather} 
\frac{d D_t(s, a)}{d t}=\frac{\rho_{\pi_t}(s)\pi_t(a|s)}{D_t(s, a)}-\frac{\rho_{\pi_E}(s)\pi_E(a|s)}{1-D_t(s, a)},~~~~\\
\frac{d \pi_t(a | s)}{d t}=-\rho_{\pi_t}(s) A^{\pi_t}(s,a)\label{bianfen2}.
\end{gather}
\end{theorem}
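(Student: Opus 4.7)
The plan is to compute the functional derivatives of $V_D$ and $V_\pi$ pointwise and then substitute them into the gradient-flow equations $dD_t/dt = \partial V_D/\partial D_t$ and $d\pi_t/dt = -\partial V_\pi/\partial \pi_t$ already introduced in the text. The discriminator equation is a direct pointwise computation; the policy equation needs a function-space policy-gradient argument followed by a projection onto the probability simplex.

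For the discriminator, I would first rewrite
\[
V_D(D,\pi) = \int \rho_{\pi}(s)\pi(a|s)\log D(s,a)\,ds\,da + \int \rho_{\pi_E}(s)\pi_E(a|s)\log(1 - D(s,a))\,ds\,da,
\]
using the identity $\mathbb{E}_\pi[g] = \int \rho_\pi(s)\pi(a|s)g(s,a)\,ds\,da$ from the preliminaries. Since $D$ enters each integrand only pointwise, the functional derivative is ordinary differentiation in $D(s,a)$, yielding $\rho_\pi(s)\pi(a|s)/D(s,a) - \rho_{\pi_E}(s)\pi_E(a|s)/(1-D(s,a))$, which is the first equation of the theorem.

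For the policy, the subtlety is that $\pi$ enters $V_\pi$ both through $\rho_\pi$ in the trajectory distribution and through the explicit $\log\pi$ in the entropy. I would write $V_\pi = \mathbb{E}_\pi[\sum_{n\geq 0}\gamma^n r_\pi(s_n,a_n)]$ with $\pi$-dependent pseudo-reward $r_\pi(s,a) = \log D(s,a) + \lambda\log\pi(a|s)$, and apply the policy-gradient theorem in function space: unrolling the Bellman recursion for $V^\pi$ and differentiating with respect to $\pi(a|s)$ gives the contribution $\rho_\pi(s)\,Q^\pi(s,a)$, with $Q^\pi$ matching the paper's definition since $\lambda\log\pi$ is already absorbed into $r_\pi$. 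Differentiating the explicit $\log\pi$ factor at the current step contributes an additional $a$-independent term $\lambda\rho_\pi(s)$. Because $\pi(\cdot|s)$ is constrained to the simplex, the flow must then be projected onto $\{\delta\pi:\int\delta\pi(a|s)\,da = 0\}$, which subtracts the $\pi$-weighted $a$-mean of the unconstrained derivative; this annihilates the $\lambda\rho_\pi(s)$ constant and converts $Q^\pi$ into $A^\pi = Q^\pi - \mathbb{E}_\pi[Q^\pi]$. Negating for the minimization direction yields $d\pi_t(a|s)/dt = -\rho_{\pi_t}(s)A^{\pi_t}(s,a)$.

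The main obstacle I expect is the Bellman-unrolling step: carefully isolating the gradient contribution flowing through the occupancy $\rho_\pi$ from the one flowing through the explicit $\log\pi$ in $r_\pi$, and confirming that their sum reduces, up to the $a$-independent term killed by the simplex projection, to the compact $\rho_\pi Q^\pi$ form. Once that bookkeeping is in place the remainder is routine algebra.
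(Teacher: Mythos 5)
Your proposal is correct and reaches the same two equations, but the route for the policy part differs from the paper's in a way worth noting. For the discriminator you do exactly what the paper does (rewrite the expectations as integrals against $\rho_\pi(s)\pi(a|s)$ and differentiate pointwise in $D(s,a)$). For the policy, the paper (Lemmas \ref{lemma1}--\ref{lemma2}) never computes the functional derivative directly: it first proves a parameter-space result, $\frac{\partial}{\partial\theta}J(\theta)=\mathbb{E}_{\pi_\theta}\bigl[\frac{\partial\log\pi_\theta}{\partial\theta}A^{\pi_\theta}\bigr]$, where the extra term from the explicit $\log\pi_\theta$ vanishes because $\frac{\partial}{\partial\theta}\int\pi_\theta(a|s)\,da=0$, and then identifies $\frac{\partial V_\pi}{\partial\pi}=\rho_\pi A^\pi$ by matching this against the functional chain rule $\frac{\partial V_\pi}{\partial\theta}=\int\frac{\partial V_\pi}{\partial\pi}\frac{\partial\pi}{\partial\theta}$. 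You instead stay entirely in function space: Bellman unrolling gives $\rho_\pi(s)Q^\pi(s,a)$ plus the $a$-independent $\lambda\rho_\pi(s)$, and you dispose of constants by restricting to normalization-preserving variations. The two arguments are essentially the same computation, and both inherit the same ambiguity: on the constraint set $\int\pi(a|s)\,da=1$ the functional derivative is only determined up to an additive function of $s$, so the specific representative $A^\pi=Q^\pi-\mathbb{E}_{\pi(a|s)}[Q^\pi]$ is a choice, not a consequence of an orthogonal projection -- indeed the literal $L^2$ projection onto $\{\delta\pi:\int\delta\pi(a|s)\,da=0\}$ subtracts the \emph{unweighted} mean over $a$, not the $\pi$-weighted mean you invoke. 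This is a cosmetic, not substantive, gap (any such representative yields the same parameter-space dynamics), and the paper quietly fixes the ambiguity the same way via the standard advantage baseline; but if you want your projection sentence to be literally true you should either use a $\pi$-weighted inner product or simply say, as the paper implicitly does, that you select the representative whose pairing with all admissible variations matches the policy gradient, which is $\rho_\pi A^\pi$.
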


\subsection{On the Convergence of GAIL} \label{sec:notconverge}

Now, we study the optimization stability of GAIL using the dynamical system Eq.~(\ref{bianfen2}). 
The desirable outcome of the GAIL training process, is for the learned policy to perfectly match the expert policy, and the discriminator to be unable to distinguish between the expert and learned policy. 
\begin{definition} \label{def_desired_state} \textbf{(Desired state)} 
    We define the desired outcome of the GAIL training process as the discriminator and policy reaching 
    $
    D^*_t(s, a)= \frac{1}{2} , \pi^*_t(a|s) = \pi_E(a|s). \nonumber $
\end{definition}

We are interested in understanding whether GAIL converges to the desired state. As discussed in Sec.~\ref{sec:dynamical-systems},  the desired state should be the equilibrium of the dynamical system Eq. (\ref{bianfen2}) for such convergence. According to Def. \ref{eqdef}, the dynamical system should equal to zero at this point, but we present the following theorem (proved in Proposition \ref{propneq0} and \ref{propneq1}):
\begin{theorem}  
    The training dynamics of GAIL does not converge to the desired state, and we have 
    \begin{equation}\label{noteq0}
    \frac{d D^*_t(s,a)}{d t} = \frac{\rho_{\pi^*_t}(s)\pi^*_t(a|s)}{D^*_t}-\frac{\rho_{\pi_E}(s)\pi_E(a|s)}{1-D^*_t}=0,   \frac{d \pi^*_t(a|s)}{d t} = -\rho_{\pi^*_E}(s) A^{\pi^*_E}(s,a) \neq 0.
\end{equation}
\end{theorem}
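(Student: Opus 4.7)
The plan is to verify both equalities in Eq.~\eqref{noteq0} by directly substituting the desired state $D^*(s,a)=1/2$, $\pi^*(a|s)=\pi_E(a|s)$ (which also forces $\rho_{\pi^*}=\rho_{\pi_E}$) into the dynamical system Eq.~\eqref{bianfen2} from the previous theorem. The discriminator equality reduces to a one-line cancellation, while the policy equality splits into a vanishing ``reward'' contribution and a surviving ``entropy'' contribution that must be shown to be non-trivial.

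For the discriminator, setting $D=1/2$ and $\pi=\pi_E$ in the first line of Eq.~\eqref{bianfen2} gives $2\rho_{\pi_E}(s)\pi_E(a|s)-2\rho_{\pi_E}(s)\pi_E(a|s)=0$, so $\frac{dD^*_t}{dt}=0$ holds pointwise in $(s,a)$. I would present this as an immediate substitution.

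For the policy, I would expand the advantage $A^{\pi_E}(s,a)=Q^{\pi_E}(s,a)-\mathbb{E}_{a\sim\pi_E(\cdot|s)}[Q^{\pi_E}(s,a)]$ using the definition $Q^{\pi}(s,a)=\mathbb{E}_\pi[\log D(\bar s,\bar a)+\lambda\log\pi(\bar a|\bar s)\mid s_0=s,a_0=a]$. At the desired state the $\log D^*$ inside the expectation becomes the constant $\log(1/2)$, so its contribution to $Q^{\pi_E}$ is the state- and action-independent constant $\log(1/2)/(1-\gamma)$, which is wiped out by subtracting the action-averaged value. What remains is the entropy-regularization piece $\lambda\,\mathbb{E}_{\pi_E}[\sum_n \gamma^n \log\pi_E(a_n|s_n)\mid s_0=s,a_0=a]$, whose leading action-dependent term is $\lambda\log\pi_E(a|s)$ plus a discounted future-entropy tail. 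Consequently $A^{\pi_E}(s,a)$ reduces to a non-trivial ``entropy advantage,'' and $\rho_{\pi_E}(s)A^{\pi_E}(s,a)$ is generically nonzero on the support of $\rho_{\pi_E}$.

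The main obstacle is ruling out \emph{universal} cancellation, i.e. establishing that the entropy advantage cannot vanish identically in $(s,a)$. I would argue by contradiction: $A^{\pi_E}\equiv 0$ would force $Q^{\pi_E}(s,\cdot)$ to be constant in $a$, which in turn would force $\pi_E$ to be the soft-Bellman-optimal (i.e.\ maximum-entropy) policy of the self-referential MDP whose reward at step $n$ is $\lambda\log\pi_E(a_n|s_n)$—a highly non-generic condition on the expert. A cleaner route, and the one I expect the appendix to take, is to exhibit an explicit MDP and a non-uniform $\pi_E$ for which the advantage is visibly nonzero at some $(s,a)$, thereby disqualifying the desired state as an equilibrium. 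The conceptual takeaway I would emphasise is that the obstruction to convergence comes not from the adversarial signal—whose contribution is annihilated at $D^*=1/2$—but from the entropy regulariser $-\lambda H(\pi)$ in the GAIL objective, which pulls $\pi_t$ away from $\pi_E$ whenever $\pi_E$ is not itself entropy-maximal.
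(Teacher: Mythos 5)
Your proposal follows essentially the same route as the paper: substitute the desired state, observe that the discriminator derivative cancels pointwise, and note that in $A^{\pi_E}(s,a)$ the constant $\log\frac{1}{2}$ contribution is annihilated by the action average so that only a $\lambda$-weighted ``entropy advantage'' survives, which the paper writes explicitly as $\lambda\sum_{n}\gamma^{n}\int\bigl[p(s_n=\bar s\mid s_0=s)-p(s_n=\bar s\mid s_0=s,a_0=a)\bigr]H(\pi_E(\cdot\mid\bar s))\,d\bar s$ (Propositions~\ref{propneq1} and \ref{propneq0}). Your concern about ruling out universal cancellation is well placed, but the paper handles it no more rigorously than you do---it simply argues genericity (the conditional entropies and action-conditioned visitation distributions need not coincide) rather than giving a contradiction argument or an explicit counterexample, so your treatment is, if anything, slightly more careful on that point.
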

\vspace{-0.15in}
Hence, the desired state is not an equilibrium, and GAIL will not converge to it.
Since $\frac{d \pi^*_t(a|s)}{d t}\neq 0$, even if the system is forced to the desired state, it will drift away from it.
We find that the non-equilibrium result is due to the entropy term $\lambda H(\pi)$, and an equilibrium could be achieved by simply setting $\lambda=0$ (Corollary \ref{corr_no_entropy}). However, the  entropy term is essential since it resolves the exploration issue and prevents over-fitting.
Therefore, we aim to design a controller that not only keeps the entropy term but also improves the theoretical convergence guarantee.

\section{Controlled GAIL} \label{control}

Having shown in Section \ref{sec:dynamic} that GAIL does not converge to the desired state, this section considers adding a controller to enable the convergence. 
We design controllers for both the discriminator and the policy. We show that this controlled system converges to the desired equilibrium and also achieves asymptotic stability in a simplified ``one-step'' setting. 


\subsection{Controlling the Training Process of GAIL} \label{sec:controlller}
Establishing the convergence for GAIL is challenging since the occupancy measure $\rho_\pi$ involves an expectation over the states generated by playing the policy $\pi$ for infinite many steps. We simplify the analysis by truncating the trajectory length to one: we only consider the evolution from timestep $n$ to $n+1$. We refer this simplified setting as ``one-step GAIL'', and the convergence guarantee of our proposed algorithm will be established in this simplified setting. 
Let $p(s)$ be the probability of the state at $s$ on timestep $n$. The objectives for the discriminator and the policy can then be simplified as, 
\begin{align*}
\tilde{V}_{D}(D, \pi)&= \int_{a}\int_{s} p(s) \pi(a|s)\log D(s,a) + \pi_E(a|s)\log(1-D(s,a))\;ds \;da , \\
\tilde{V}_{\pi}(D, \pi) &= \int_{a}\int_{s} p(s) \pi(a|s)\log D(s,a) + \lambda p(s) \pi(a|s) \log\pi(a|s)\;ds \;da .
\end{align*}

The gradient flow dynamical system of these functions is,
\begin{align}
\frac{d D_t(s, a)}{dt} 
&= \frac{p(s) \pi_t(a|s)}{D_t(s, a)} + \frac{p(s) \pi_E(a|s)}{D_t(s, a) -1}, \label{simD}\\
 \frac{d\pi_t(a|s)}{dt}     
    &= -p(s) (\log D_t(s, a) +  \lambda \log\pi_t(a|s) + \lambda). \label{simG}
\end{align}



With this ``one-step'' simplification, the GAIL dynamics now reveal a clearer structure. For a given $(s, a)$ pair, the change of $D(s, a)$ and $\pi(a | s)$ only depends on $D(s, a), \pi(a | s), p(s)$ and $\pi_E(a|s)$ for the same $(s, a)$ pair, without the need to access function values of other $(s, a)$ pairs. 
Therefore, we can decompose Eq.~(\ref{simD}) \& (\ref{simG}), which are ODEs of \emph{functions}, into a series of ODEs of \emph{scalar values}.
Each ODE only models the dynamics of two scalar values $(D(s, a), \pi(a | s))$ for a particular $(s, a)$ pair. 
We will add controller to the scalar ODEs, to asymptotically stabilize their dynamical system. 
Proving that each scalar ODE is stable suggests that the functional ODE will also be stable.
Note that such decomposition is not possible without the ``one-step'' simplification, since the evolution of $D$ and $\pi$ for all $(s, a)$ pairs is coupled through $\rho_\pi(s)$ and $A^\pi(s, a)$ in Eq.~(\ref{bianfen2}). 





Based on the above discussion, we now consider the stability of a system of ODEs for two scalar variables $(D(s, a), \pi(a | s))$. With $s, a$ given, we simplify the notation as $x(t) := D_t(s, a)$, $y(t) := \pi_t(s|a)$, $E:=\pi_E(a|s)$, $c := p(s)$, so each scalar ODE can be rewritten as,
\begin{gather}
     \frac{dx (t)}{dt} = \frac{c y(t)}{x(t)} + \frac{c E}{x(t) - 1}, 
      \frac{dy (t)}{dt} = -c\log x(t) - c\lambda \log y(t) - c \lambda. \label{eq_yt_dt}
\end{gather}

We showed earlier that the GAIL dynamic in Eq. (\ref{bianfen2}) does not converge to the desired state. Similarly, neither does our simplified `one-step' dynamic in Eq. (\ref{eq_yt_dt}) converge to the desired state. We now consider the addition of controllers to push our dynamical system to the desired stated. Specifically, we consider linear negative feedback control \citep{boyd1991linear}, which can be applied to a dynamical system to reduce its oscillation. We specify our controlled GAIL system as,
\begin{align}
\frac{\mathrm{d}x(t)}{\mathrm{d}t}&= \frac{c y(t)}{x(t)} + \frac{c E}{x(t) - 1}+u_1(t)\label{eqxu}\\
\frac{\mathrm{d}y(t)}{\mathrm{d}t}&=-c\log x(t) - c\lambda \log y(t) - c \lambda +u_2(t)\label{eqyu},
\end{align}
where $u_1(t)$ and $u_2(t)$ are the controllers to be designed for the discriminator and policy respectively. Since the derivative of the discriminator with respect to time evaluated at the desired state (Def. \ref{def_desired_state}) already equals zero, the discriminator is already able to reach its desired state. 
Nevertheless, the discriminator can still benefit from a controller to speed up the rate of convergence -- we choose a linear negative feedback controller for $u_1(t)$ to push the discriminator towards its desired state. 
On the other hand, the derivative of the policy generator evaluated at its desired state in Eq. (\ref{eq_yt_dt}) does not equal zero. Therefore, $u_2(t)$ should be set to make Eq. (\ref{eqyu}) equal to zero evaluated at the desired state. We have designed it to cancel out all terms in Eq. (\ref{eq_yt_dt}) at this desired state, and also provide feasible hyperparameter values for an asymptotically stable system.
Hence, we select $u_1(t)$ and $u_2(t)$ to be the following functions,
\begin{gather}
    u_1(t) = -k(x(t)-\frac{1}{2}) \label{eq_u1}, \\
    u_2(t)= c\lambda \log E+c\log\frac{1}{2}+c\lambda + \alpha \frac{y(t)}{E} - \alpha,  \label{eq_u2}
\end{gather}
where $k, \alpha$ are hyper-parameters. Intuitively, as $k$ gets larger, the discriminator will be pushed harder towards the optimal value of $1/2$. This means the discriminator would converge at a faster speed but may also have a larger radius of oscillation. 


\subsection{Analyzing the Stability of Controlled GAIL}
In this section, we apply Theorem \ref{stabletheorem} to formally prove that the controlled GAIL dynamical system described in Eq. (\ref{eqxu}) \& (\ref{eqyu}) is \textit{asymptotically stable} (Def. \ref{def_stable_asym}) and give bounds with $\lambda$, $\alpha$, and $k$.

For simplicity, let us define $z(t) = (x(t), y(t))^\top$, and a function $f$ such that $f(z(t))$ is the vector
$\big[ \frac{c y(t)}{x(t)} + \frac{c E}{x(t) - 1}-k\left(x(t)-\frac{1}{2}\right) ,
    c\log \frac{1}{2} + c\lambda \log E  -c\lambda \log y(t) -c\log x(t) + \alpha \frac{y(t)}{E} - \alpha\big]^\top$.
Therefore, our controlled training dynamic of GAIL in Eq. (\ref{eqxu}) and Eq. (\ref{eqyu}) can be transformed to the following vector form
\begin{equation} \label{vector}
    d (z(t)) =f(z(t))d t.
\end{equation}

    \begin{theorem} \label{thm_asym_stable}Let assumption \ref{assumption} hold. The training dynamic of GAIL in Eq. (\ref{vector}) is \textbf{asymptotically stable} (proof in  Appendix \ref{appB}).
    \end{theorem}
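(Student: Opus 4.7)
The plan is to apply Corollary \ref{coro}: it suffices to exhibit the desired state as an equilibrium of the controlled two-dimensional system (\ref{vector}), then show that the Jacobian $\mathbb{J}(f(z^*))$ has positive determinant and negative trace under Assumption \ref{assumption}. This is a cleaner route than directly chasing eigenvalues via Theorem \ref{stabletheorem}, and it matches the scalar-ODE structure obtained after the ``one-step'' decomposition.

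First I would verify that $z^* = (x^*, y^*)^\top = (1/2,\, E)^\top$ is an equilibrium of (\ref{vector}). Substituting into $f$, the first coordinate gives $\frac{cE}{1/2} + \frac{cE}{-1/2} - k\cdot 0 = 0$, and the second coordinate, by design of $u_2(t)$ in (\ref{eq_u2}), cancels exactly: $c\log(1/2) + c\lambda\log E - c\lambda \log E - c\log(1/2) + \alpha - \alpha = 0$. This is precisely why $u_2$ was constructed as it was; this step is essentially a sanity check that the controller does what Section \ref{sec:controlller} claims.

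Next I compute the Jacobian entries at $z^*$. Routine differentiation yields
\begin{equation*}
\mathbb{J}(f(z^*)) = \begin{pmatrix} -\frac{cy^*}{(x^*)^2} - \frac{cE}{(x^*-1)^2} - k & \frac{c}{x^*} \\[2pt] -\frac{c}{x^*} & -\frac{c\lambda}{y^*} + \frac{\alpha}{E} \end{pmatrix} = \begin{pmatrix} -8cE - k & 2c \\ -2c & \dfrac{\alpha - c\lambda}{E} \end{pmatrix}.
\end{equation*}
The trace equals $-8cE - k + (\alpha - c\lambda)/E$, and the determinant equals $(-8cE - k)(\alpha - c\lambda)/E + 4c^{2}$. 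If Assumption \ref{assumption} constrains the hyperparameters so that $\alpha < c\lambda$ (together with $k > 0$ and the mild positivity $c, E > 0$ inherited from the MDP), then $(\alpha - c\lambda)/E < 0$, which immediately makes the trace strictly negative, and the same sign information makes the first term of the determinant strictly positive, so the determinant exceeds $4c^2 > 0$. Invoking Corollary \ref{coro} finishes the argument, upgrading to asymptotic stability in the sense of Def.\ \ref{def_stable_asym}.

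The main obstacle is not the Jacobian algebra itself but rather pinning down the precise form of Assumption \ref{assumption}: one must express the admissible region for $(\lambda, \alpha, k)$ in a way that (i) is nonempty for all $(s,a)$ pairs and thus translates into a valid choice \emph{uniformly} in $c = p(s)$ and $E = \pi_E(a|s)$, and (ii) is compatible with the positivity and well-definedness of logarithms (i.e., the linearization is valid in a neighborhood where $x(t) \in (0,1)$ and $y(t) > 0$). A subtler point is that Corollary \ref{coro} delivers local asymptotic stability only, so the statement must be read as ``within a sufficiently small $\delta$-ball around $z^*$'' — consistent with Def.\ \ref{def_stable_asym} — and I would flag this scope explicitly when writing the full proof in Appendix \ref{appB}.
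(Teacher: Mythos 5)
Your proposal is correct and takes essentially the same route as the paper's Appendix~\ref{appB} proof: verify that $z^*=(\tfrac{1}{2},E)^\top$ is an equilibrium, compute the same Jacobian at $z^*$, and conclude local asymptotic stability via positive determinant and negative trace through Corollary~\ref{coro}. Your sign bookkeeping is in fact slightly cleaner — the off-diagonal product contributes $+4c^2$ to the determinant, so once Assumption~\ref{assumption} forces $c\lambda-\alpha>k^2/(32c)>0$ both sign conditions follow immediately, whereas the paper checks the trace by analyzing its numerator as a parabola in $E$ and states a determinant expansion with the $4c^2$ term carrying the opposite (more conservative) sign.
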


\begin{assumption} \label{assumption}
    We assume $\alpha, k \in \displaystyle \mathbb{R}, k>0$, $8c^2\lambda -8c\alpha -4c
        ^2 +ck\lambda -k\alpha > 0$, and  $\frac{k^2 +32c(-c\lambda + \alpha)}{32c} < 0$.
\end{assumption}


\textbf{Proof sketch.}
The first step in proving asymptotic stability of the system in Eq. (\ref{vector}), is to verify whether our desired state is an equilibrium (Def. \ref{eqdef}).
We substitute the desired state, $z^*(t) = (\frac{1}{2}, E)^\top$, into system (\ref{vector}) and verify that $d (z^*(t)) = f(z^*(t)) = 0.$
We then find the linearized system about the desired state $ d (z(t)) = \mathbb{J}(f(z^*(t))) z(t) d t.$ Under Assumption \ref{assumption}, we show that $det(\mathbb{J} (f(z^*(t)))) > 0$ and $trace(\mathbb{J} (f(z^*(t)))) < 0$. Finally we invoke Theorem \ref{stabletheorem} and Corollary \ref{coro} to conclude that the system in Eq. (\ref{vector}) is asymptotically stable. 


\begin{algorithm}[t]
\caption{The C-GAIL algorithm}\label{alg:cap}
\begin{algorithmic}[1]
 \STATE {\bfseries Input:} Expert trajectory $\tau_E$ sampled from $\pi_E$, initial parameters $\theta_0$, and $\phi_0$ for generator and discriminator. 
  \REPEAT
\STATE Sample trajectory $\tau$ from $\pi_{\theta}$. 

\STATE Update discriminator parameters $\phi$ with gradient from,\\
$\hat{\mathbb{E}}_\tau \left[\log D(s,a) - \left(\frac{k}{2}(D(s,a) - \frac{1}{2}\right)^2 \right]  + \hat{\mathbb{E}}_{\tau_E} \left[\log(1-D(s,a))- \left(\frac{k}{2}(D(s,a) - \frac{1}{2}\right)^2 \right]$
\\ \tp{check brackets here}
\STATE Update policy parameters $\theta$ with $V_\pi (D, \pi)$ in Eq. \ref{obj1}

\UNTIL{Stopping criteria reached}
\end{algorithmic}
\end{algorithm}

\section{A Practical Method to Stabilize GAIL} \label{sec:loss}
In this section, we extend our controller from the ``one-step'' setting back to the general setting and instantiate our controller as a regularization term on the original GAIL loss function. This results in our proposed variant C-GAIL; a  method to stabilize the training process of GAIL.

Since the controllers in Eq. (\ref{eq_u2}) are defined in the dynamical system setting, we need to integrate these with respect to time, in order to recover an objective function that can be practically optimized by a GAIL algorithm.
Recalling that $V_D(D, \pi)$ and $V_\pi(D, \pi)$ are the original GAIL loss functions for the discriminator and policy (Eq.~(\ref{obj1})), we define $V_D'(D, \pi)$ and $V_\pi'(D, \pi)$ as modified loss functions with the integral of our controller applied, such that
\begin{equation*}
\begin{aligned}
     V_D'(D,\pi) &= V_D(D, \pi) - \mathbb{E}_{\pi, \pi_E}\left[\frac{k}{2}(D(s, a)-\frac{1}{2})^2\right], \\
      V_\pi' (D, \pi) &= V_\pi(D, \pi) + \mathbb{E}_{\pi, \pi_E} \left[\frac{\alpha}{2} \frac{\pi^2(a|s)}{\pi_E (a|s)}    + (c \log \frac{1}{2} + c\lambda \log \pi_E(a|s) + c\lambda - \alpha)\pi(a|s)\right].
      \end{aligned}
\end{equation*}
Note that the training dynamics with these loss functions are identical to Eq.~(\ref{eqxu}-\ref{eq_u2}) with guaranteed stability under the `one-step' setting. 



While $V_D'(D,\pi)$ can be computed directly, the inclusion of $\pi_E$, the expert policy, in $V_\pi' (D, \pi)$ is problematic -- the very goal of the algorithm is to learn $\pi_E$, and we do not have access to it during training. Hence, in our practical implementations, we only use our modified loss $V_D'(D,\pi)$ to update the discriminator, but use the original unmodified policy objective $V_\pi (D, \pi)$  for the policy. In other words, we only add the controller to the discriminator objective $V_D(D,\pi)$. This approximation has no convergence guarantee, even in the one-step setting. Nevertheless, the control theory-motivated approach effectively stabilizes GAIL in practice, as we shall see in Sec.~\ref{Evaluation}. Intuitively, C-GAIL pushes the discriminator to its equilibrium at a faster speed by introducing a penalty controller centered at $\frac{1}{2}$. With proper selection of the hyperparameter $k$ (ablation study provided in Appendix \ref{appD}), the policy generator is able to train the discriminator at the same pace, leading GAIL's training to converge faster with a smaller range of oscillation, and match the expert distribution more closely.

Our C-GAIL algorithm is listed in Alg.~\ref{alg:cap}. It can be implemented by simply adding a regularization term to the discriminator loss. 
Hence, our method is also compatible with other variants of GAIL, by straightforwardly incorporating the regularization into their discriminator objective function. 

\section{Evaluation} \label{Evaluation}


This section evaluates the benefit of integrating the controller developed in Section \ref{control} with popular variants of GAIL. 
We test the algorithms on their ability to imitate an expert policy in simulated continuous control problems in MuJoCo \citep{todorov2012mujoco}.
Specifically, we consider applying our controller to two popular GAIL algorithms -- both the original `vanilla' GAIL \citep{ho2016generative} and also GAIL-DAC \citep{kostrikov2018discriminator}, a state-of-the-art variant which uses a discriminator-actor-critic (DAC) to improve sample efficiency and reduce the bias of the reward function.
{Additionally, we include supplementary experiments compared with other GAIL variants such as  \citet{jena2021augmenting} and \citet{ xiao2019wasserstein} in appendix \ref{suppexp}.}

\begin{figure}[t]  
    \centering
\begin{minipage}{0.327\linewidth}
	\centering
 {\footnotesize Half-Cheetah}
	\includegraphics[width=\linewidth]{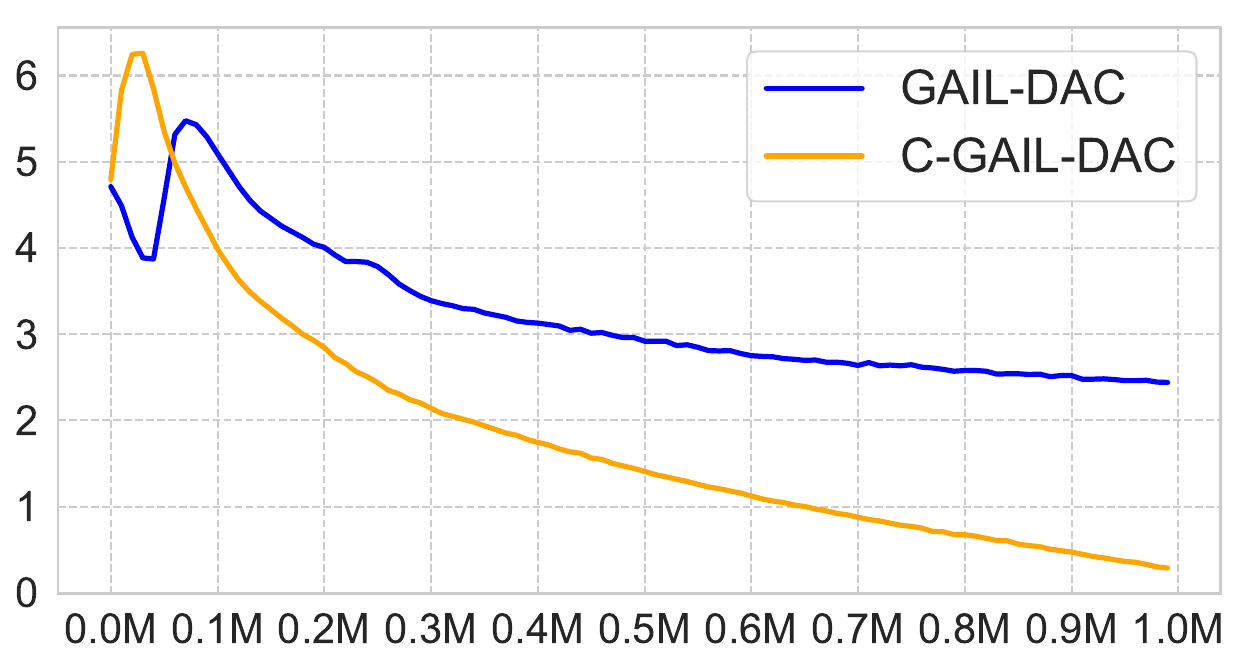}
\end{minipage} 
\begin{minipage}{0.327\linewidth}
	\centering
 {\footnotesize Walker}
	\includegraphics[width=\linewidth]{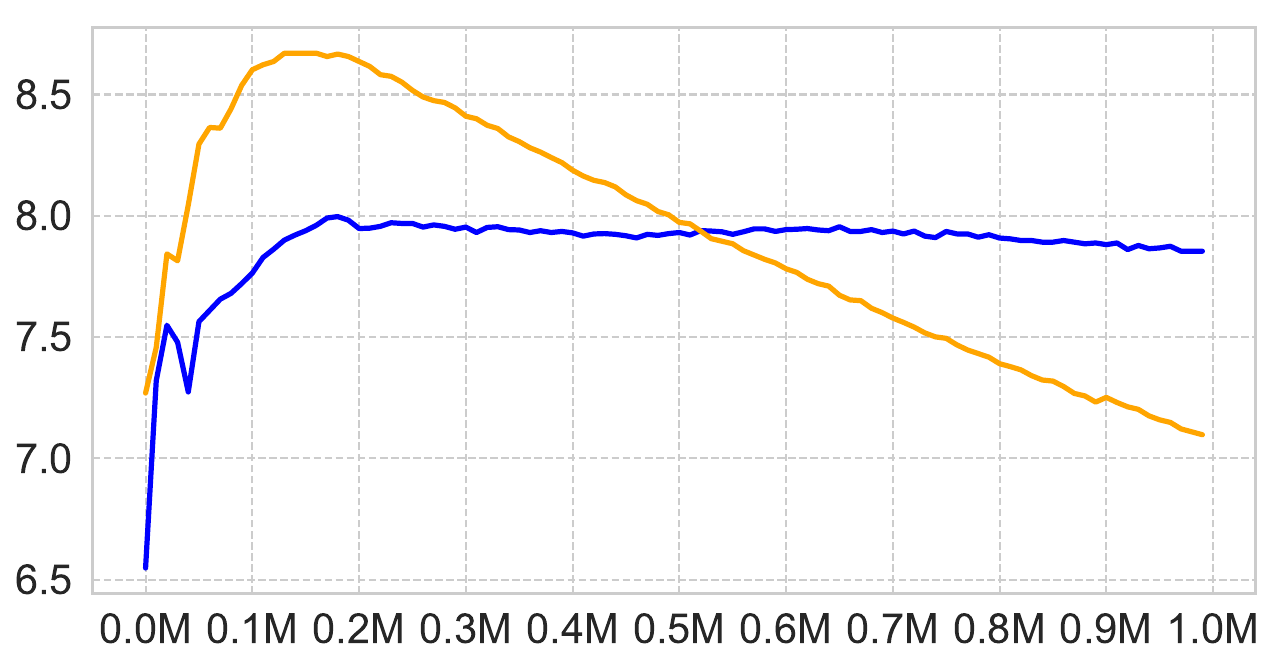}
\end{minipage} 
\begin{minipage}{0.327\linewidth}
	\centering
 {\footnotesize Reacher}
	\includegraphics[width=\linewidth]{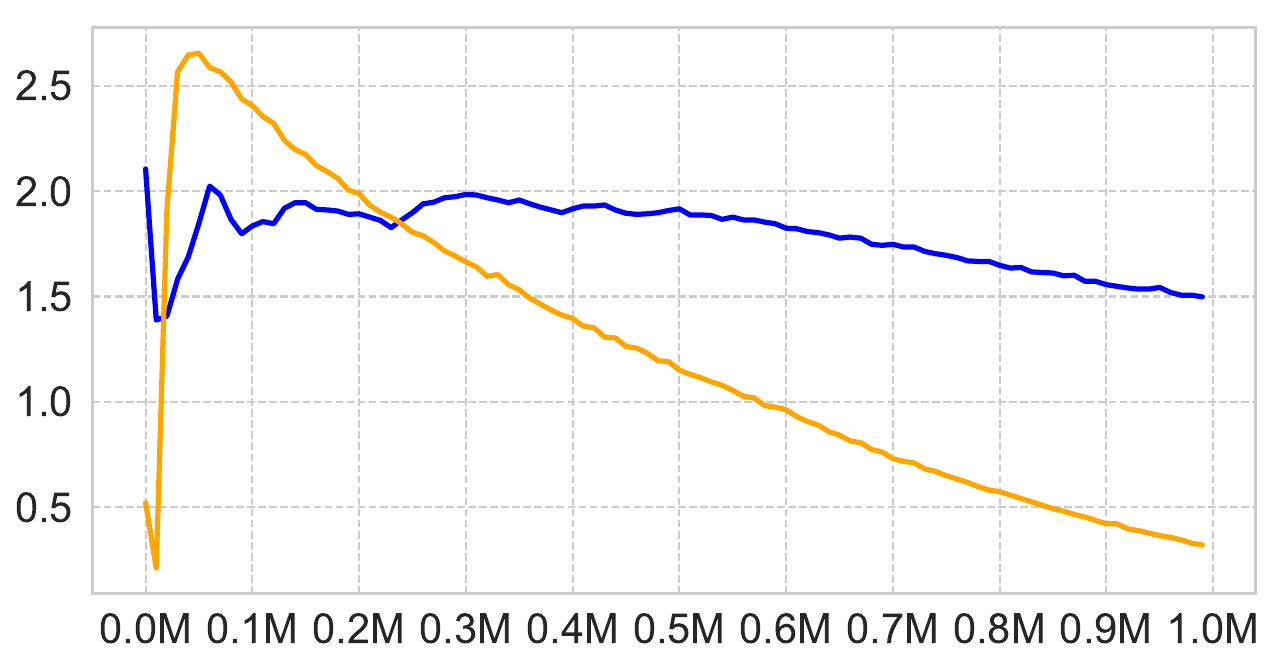}
\end{minipage} \\

\begin{minipage}{0.327\linewidth}
	\centering
     {\footnotesize Ant}
	\includegraphics[width=\linewidth]{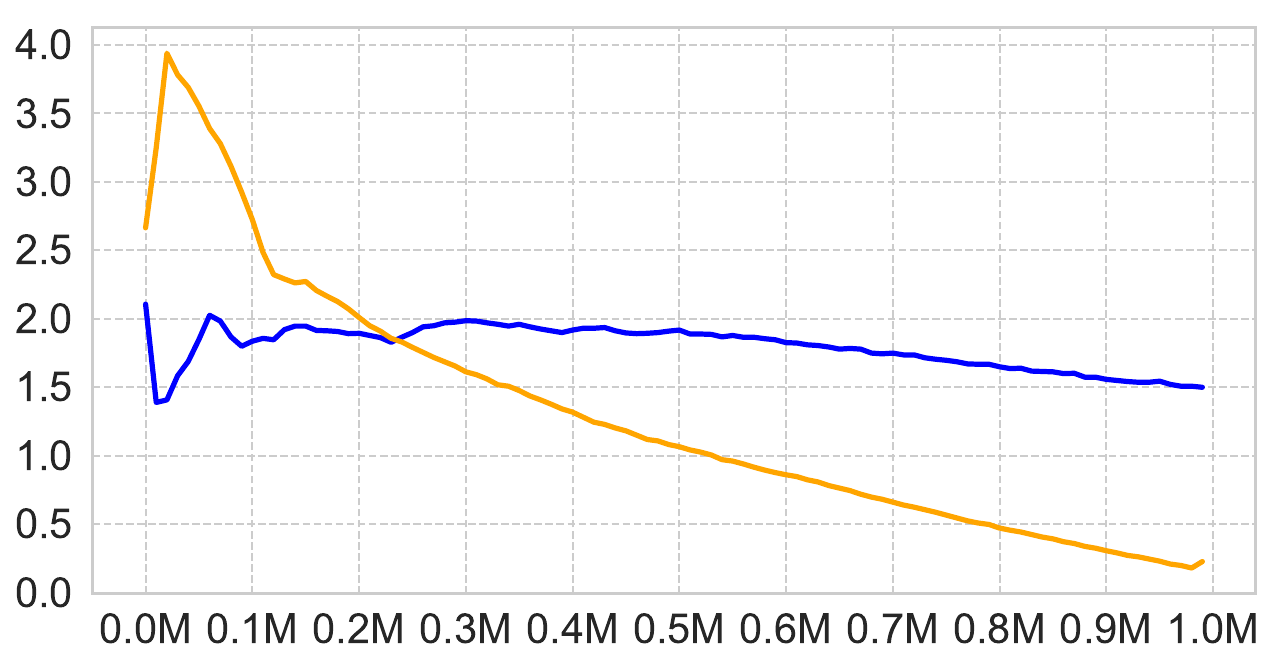}
\end{minipage} 
\begin{minipage}{0.327\linewidth}
	\centering
 {\footnotesize Hopper}
	\includegraphics[width=\linewidth]{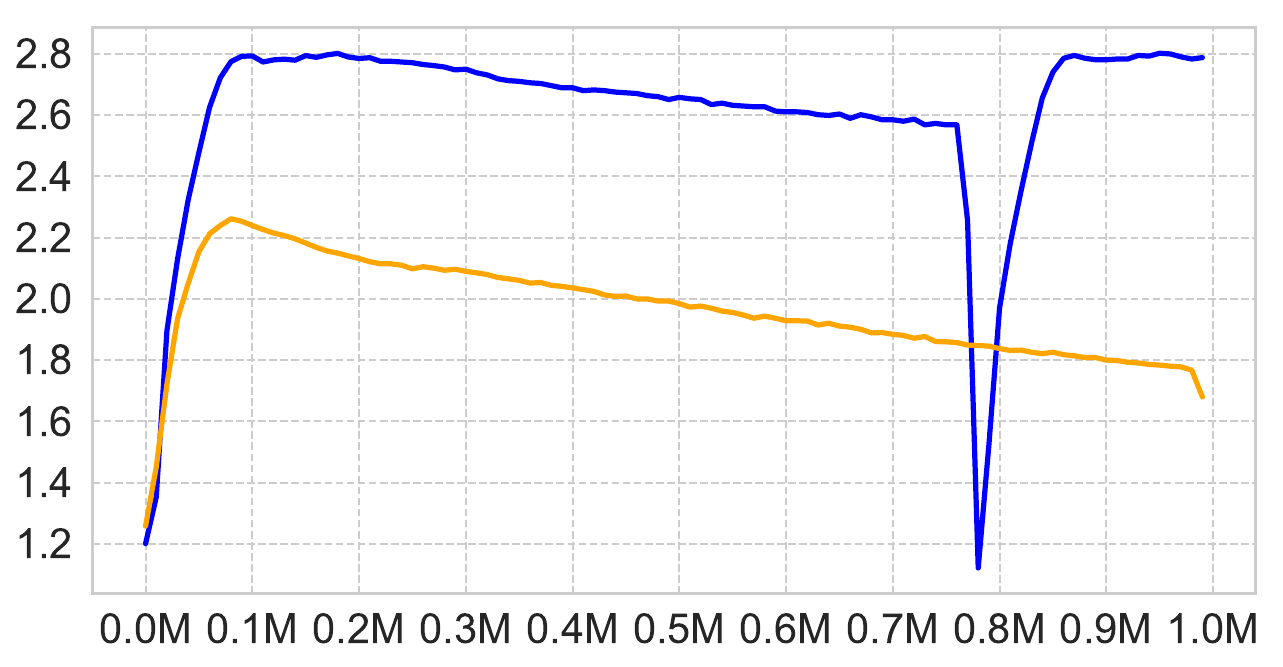}
\end{minipage} 
 \\
 \caption{State Wasserstein distance (lower is better) between expert and learned policies, over number of gradient step updates. Our controlled variant matches the expert distribution more closely.}
    \label{fig:distance}
\vspace{-0.1in}
\end{figure}

\subsection{Experimental Setup} \label{settings}
We incorporate our controller in vanilla GAIL and GAIL-DAC, naming our controlled variants C-GAIL and C-GAIL-DAC. 
We leverage the implementations of \citet{gleave2022imitation} (vanilla GAIL) and \citet{kostrikov2018discriminator} (GAIL-DAC). \citet{gleave2022imitation} also provide other common imitation learning frameworks -- BC, AIRL, and dataset aggregation (DAgger) \citep{ross2011reduction} -- which we also compare to.  

For C-GAIL-DAC, we test five MuJuCo environments: Half-Cheetah, Ant, Hopper, Reacher and Walker 2D. Our experiments follow the same settings as \citet{kostrikov2018discriminator}. The discriminator architecture has a two-layer MLP with 100 hidden units and tanh activations. The networks are optimized using Adam with a learning rate of $10^{-3}$, decayed by $0.5$ every $10^5$ gradient steps. We vary the number of provided expert demonstrations: $\{4, 7, 11, 15, 18\}$, though unless stated we report results using four demonstrations. We assess the normalized return over training for GAIL-DAC and C-GAIL-DAC to evaluate their speed of convergence and stability, reporting the mean and standard deviation over five random seeds.
The normalization is done with 0 set to a random policy's return and 1 to the expert policy return. 

In addition to recovering the expert's return, we are also interested in how closely our policy generator's and the expert's \textit{state distribution} are matched, for which we use the \textbf{state Wasserstein} \citep{pearce2023imitating}. This requires samples from two distributions, collected by rolling out the expert and learned policy for 100 trajectories each. We then use the POT library’s `emd2' function \citep{flamary2021pot} to compute the Wasserstein distance, using the L2 cost function with a uniform weighting across samples.

To evaluate C-GAIL, we follow the experimental protocol from \citet{gleave2022imitation}, both for GAIL and other imitation learning baselines. These are evaluated on Ant, Hopper, Swimmer, Half-Cheetah and Walker 2D. For C-GAIL, we change only the loss and all other GAIL settings are held constant. We assess performance in terms of the normalized return. We use this set up to ablate the controller strength hyperparameter of C-GAIL (Appendix \ref{appD}), varying $k \in \{0.1, 1, 10\}$ (ablation study of $\alpha$ is not included since our algorithm only involves controller for the discriminator in practice). Our experiments are conducted on a single NVIDIA GeForce GTX TITAN X.

\begin{figure}[b]  
    \centering
    \vspace{-0.1in}
\begin{minipage}{0.327\linewidth}
	\centering
 {\footnotesize Ant}
	\includegraphics[width=\linewidth]{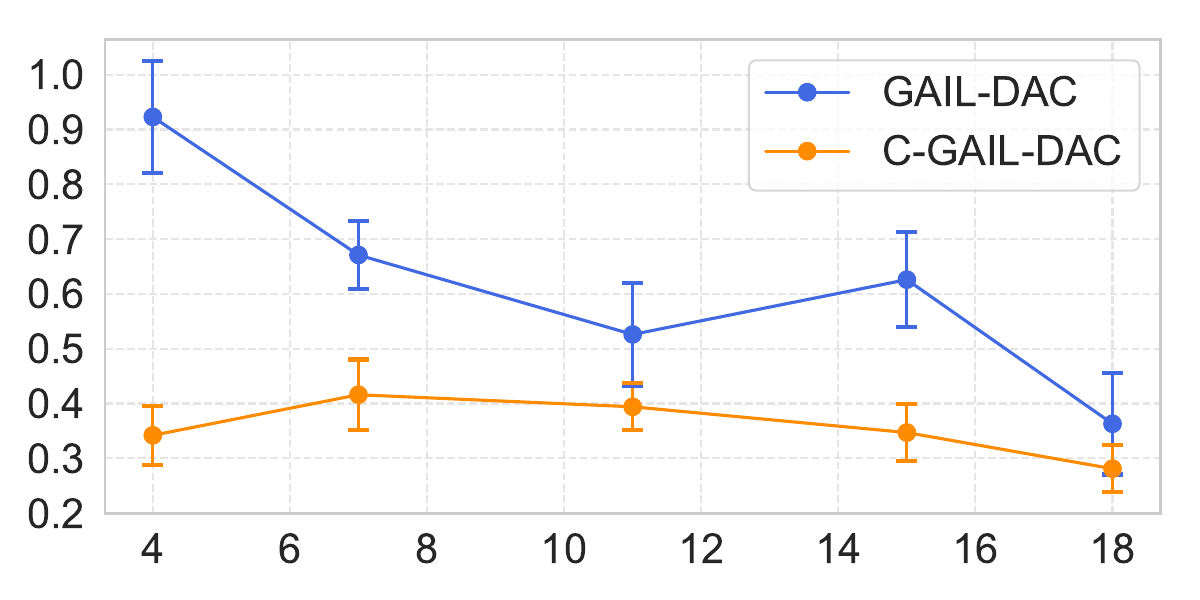}
\end{minipage} 
\begin{minipage}{0.327\linewidth}
	\centering
 {\footnotesize Walker}
	\includegraphics[width=\linewidth]{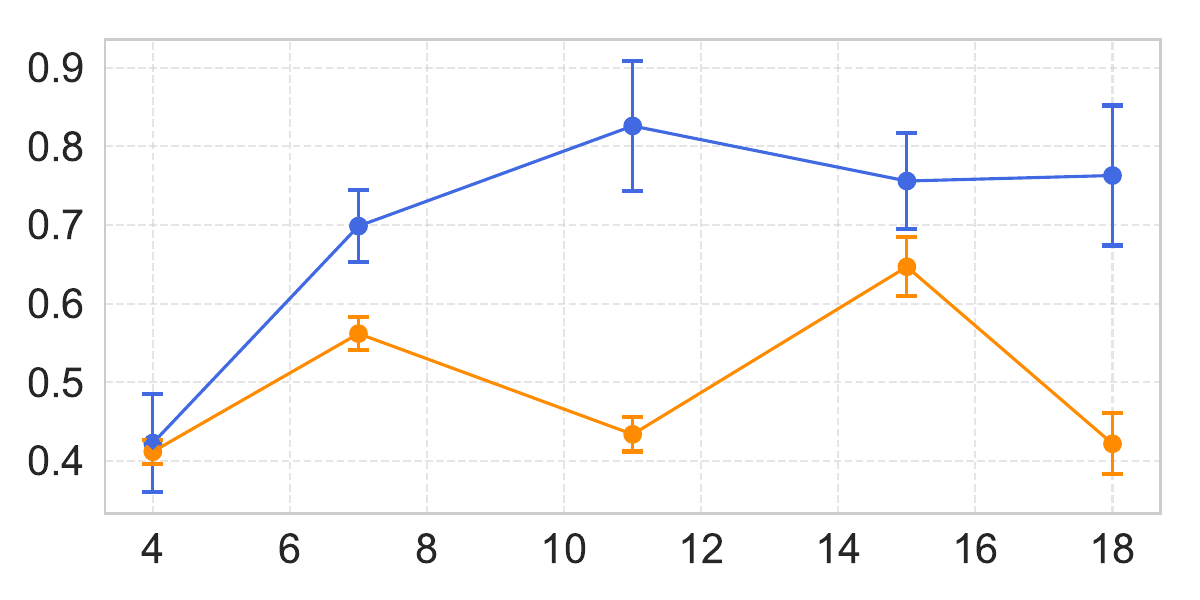}
\end{minipage} 
\begin{minipage}{0.327\linewidth}
	\centering
 {\footnotesize Reacher}
	\includegraphics[width=\linewidth]{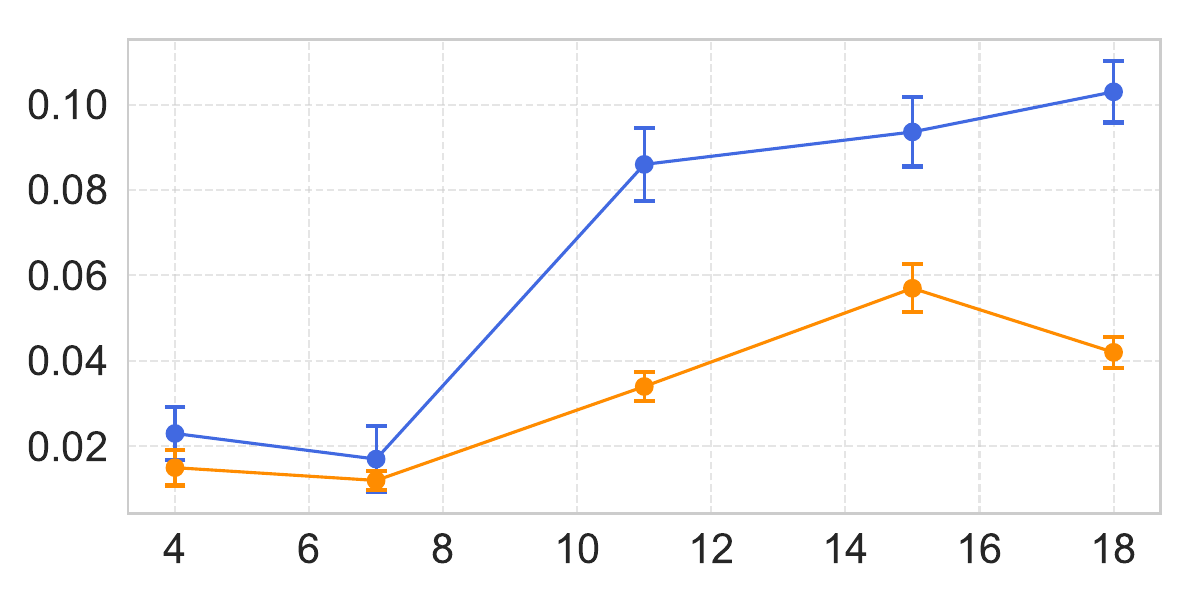}
\end{minipage} \\

\begin{minipage}{0.327\linewidth}
	\centering
     {\footnotesize Half-Cheetah}
	\includegraphics[width=\linewidth]{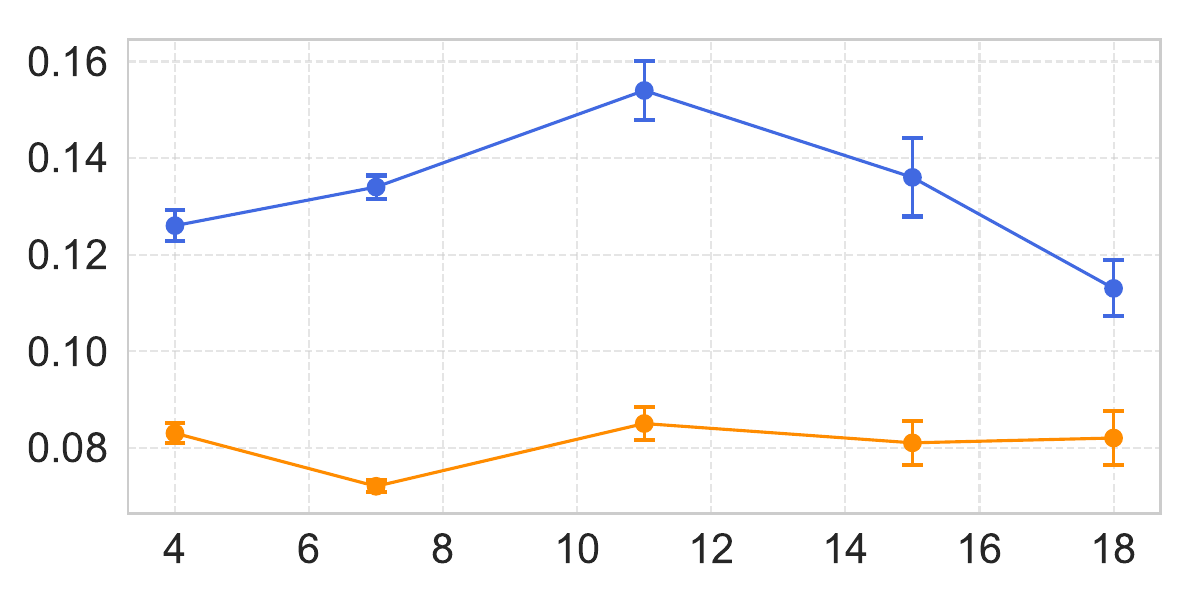}
\end{minipage} 
\begin{minipage}{0.327\linewidth}
	\centering
 {\footnotesize Hopper}
	\includegraphics[width=\linewidth]{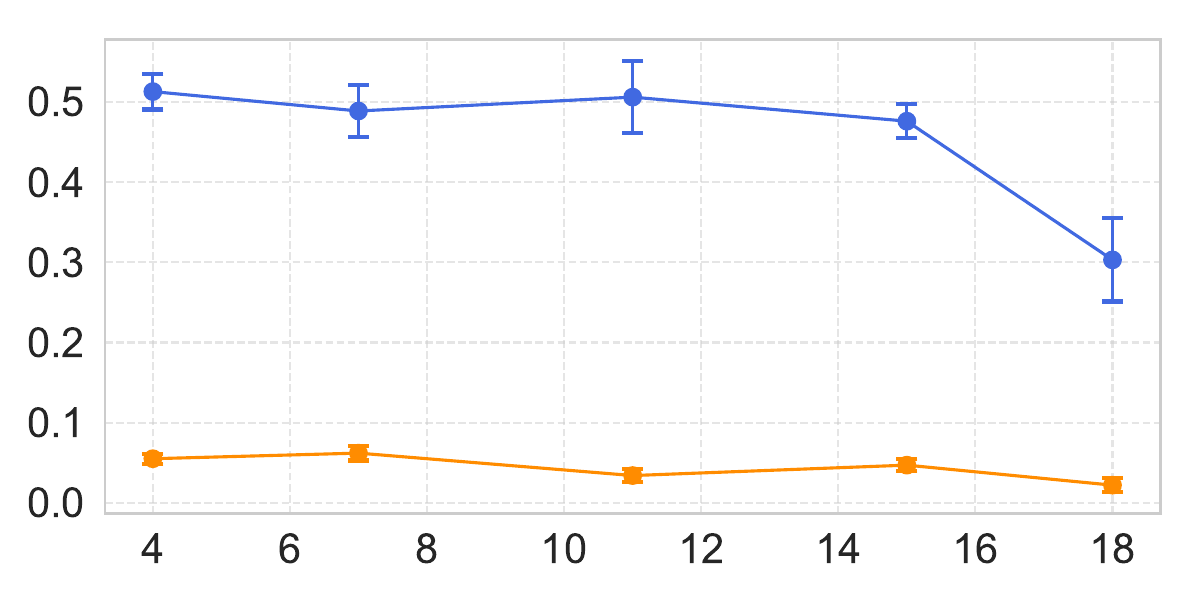}
\end{minipage} 
 \\
 \caption{Number of gradient step updates (in millions) required to reach $95\%$ of the max-return for various numbers of expert trajectories on MuJoCo environments averaged over five random seeds.}
 \label{fig:dac-converge}
\end{figure}

\begin{table*}[h!]  
\caption{Mean and standard deviation for returns of various IL algorithms and environments}
    \centering
    \begin{tabular}{c|c|c|c|c|c}
    \hline
         & Ant & Half Cheetah & Hopper& Swimmer & Walker2d \\
         \hline

        Random & $-349 \pm 31$ & $-293 \pm 36$ & $-53 \pm 62$&  $3 \pm 8$& $-18 \pm 75$\\
       Expert  & 
        $2408 \pm 110$ & $3465 \pm 162$ & $2631  \pm  19$&$298 \pm 1$ & $2631 \pm 112$ \\

       Controlled GAIL  & $2411 \pm 21$& $3435 \pm 50$ & $2636 \pm 8$& $298 \pm 0$ & $2633 \pm 12$\\
       
       GAIL &  $2087 \pm 187$ & $3293 \pm 239$ & $2579 \pm 85$ & $295 \pm 3$ & $2589 \pm 121$ \\

       BC & $1937 \pm 227$ &  $ 3465 \pm 151$& $ 2830 \pm 265 $  & $ 298 \pm 1$ & $ 2672 \pm 95$\\

        AIRL & $-121 \pm 28$&  $1837 \pm 218$  & $2536 \pm 142$ & $ 269 \pm 8$& $1329 \pm 134$\\

        DAgger & $3027 \pm 187$& $1693 \pm 74$&$2751 \pm 11$& $ 344 \pm 2$& $2174 \pm 132$\\
       
    \hline
    \end{tabular}
    \label{tab:return}
    \vspace{-0.1in}
\end{table*}

\subsection{Results}

We compare GAIL-DAC to C-GAIL-DAC in Figure \ref{fig:dac-return} (return),  \ref{fig:distance} (state Wasserstein), and \ref{fig:dac-converge} (convergence speed). 
Figure \ref{fig:dac-return} shows that C-GAIL-DAC speeds up the rate of convergence and reduces the oscillation in the return training curves across all environments. For instance, on Hopper, C-GAIL-DAC converges 5x faster than GAIL-DAC with less oscillations. On Reacher, the return of GAIL-DAC continues to spike even after matching the expert return, but this does not happen with C-GAIL-DAC. On Walker 2D, the return of GAIL-DAC oscillates throughout training, whereas our method achieves a higher return at has reduced the range of oscillation by more than 3 times. For Half-Cheetah, our method converges 2x faster than GAIL-DAC. For Ant environment, C-GAIL-DAC reduces the range of oscillations by around 10x. 

In addition to matching the expert's return faster and with more stability, Figure \ref{fig:distance} shows that C-GAIL-DAC also more closely matches the expert's state distribution than GAIL-DAC, with the difference persisting even towards the end of training for various numbers of expert trajectories. Toward the end of training, the state Wasserstein for C-GAIL-DAC is more than two times smaller than the state Wasserstein for GAIL-DAC on all five environments.

Figure \ref{fig:dac-converge} shows that these improvements hold for differing numbers of provided demonstrations. It plots the number of gradient steps for GAIL-DAC and C-GAIL-DAC to reach $95\%$ of the max-return for vaious numbers of expert demonstrations. Our method is able to converge faster than GAIL-DAC regardless of the number of demonstrations. 

\textbf{Hyperparameter sensitivity.} We evaluate the sensitivity to the controller's hyperparameter $k$ using vanilla GAIL. 
Figure \ref{fig:return} (Appendix \ref{appD}) plots normalized returns. For some environments, minor gains can be found by tuning this hyperparameter, though in general for all values tested, the return curves of C-GAIL approach the expert policy's return earlier and with less oscillations than GAIL. This is an important result as it shows that our regularizer can easily be applied by practitioners without the need for a fine-grained hyperparameter sweep.


\textbf{Other imitation learning methods.}
Table \ref{tab:return} benchmarks C-GAIL against other imitation learning methods, including BC, AIRL, and DAgger, some of which have quite different requirements to the GAIL framework.
The table shows that C-GAIL is competitive with many other paradigms, and in consistently offers the lowest variance between runs of any method. Moreover, we include supplementary experiments compared with \citet{jena2021augmenting} and \citet{ xiao2019wasserstein} in appendix \ref{suppexp}.



\section{Discussion \& Conclusion} \label{discussion}
This work helped understand and address the issue of training instability in GAIL using the lens of control theory. This advances recent findings showing its effectiveness in other adversarial learning frameworks.
We formulated GAIL's training as a dynamical system and designed a controller that stabilizes it at the desired state, encouraging convergence to this point. We showed theoretically that our controlled system achieves asymptotic stability under a ``one-step'' setting. We proposed a practical realization of this named C-GAIL, which reaches expert returns both faster and with less oscillation than the uncontrolled variants, and also matches their state distribution more closely.


Whilst our controller theoretically converges to the desired state, and empirically stabilizes training, we recognize several limitations of our work. 
In our description of GAIL training as a continuous dynamical system, we do not account for the updating of generator and discriminator being discrete as in practice. In our practical implementation of the controller, we only apply the portion of the loss function acting on the discriminator, since the generator portion requires knowing the likelihood of an action under the expert policy (which is precisely what we aim to learn!).
We leave it to future work to explore whether estimating the expert policy and incorporating a controller for the policy generator brings benefit.

\begin{ack}


We thank Professor Yang Gao for support in discussions. This work was supported by the National Science and Technology Major Project (2021ZD0110502),  NSFC Projects (Nos.~62376131, 62061136001, 62106123, 62076147, U19A2081, 61972224), and the High Performance Computing Center, Tsinghua University. J.Z is also supported by the XPlorer Prize. 
\end{ack}

\bibliography{neurips_2024}
\bibliographystyle{unsrtnat}

\appendix
\section{Broader impact} \label{sec:impact}
This work has provided an algorithmic advancement in imitation learning. As such, we have been cognisant of various issues such as those related to learning from human demonstrators -- e.g. privacy issues when collecting data. However, this work avoids such matters by using trained agents as the demonstrators. More broadly, we see our work as a step towards more principled machine learning methods providing more efficient and stable learning, which we believe in general has a positive impact.

\section{Basics of Functional Derivatives}
We provide some background on functional derivatives, which are necessary for the derivations in Appendix \ref{sec:proofs}. For a more rigorous and detailed introduction, please refer to the Appendix D of \citep{bishop2006pattern}.

\begin{definition} (functionals) We define a \emph{functional} $F[y]:f\longrightarrow \mathbb{R}$ to be an operator that takes a function $y(x)$ and returns a scalar value $F[y]$.
\end{definition}
 
\begin{definition} (functionals derivatives) We consider how much a functional $F[y]$ changes when we make a small change $h\eta(x)|_{h\to0}$ to the function. If we have
\begin{align*}\int \frac{\partial F}{\partial y}(x) \eta(x) \ \mathrm{d}x = \lim_{h \rightarrow 0}\frac{F(y + h \eta) - F(y)}{h} = \frac{d F(y + h\eta)}{dh}\bigg\rvert_{h=0},\end{align*}
such that
\begin{equation*}
    F(y + h \eta) = F(y) + h \int \frac{\partial F}{\partial y}(x) \eta(x) \ \mathrm{d}x + \mathcal{O}(h^2),
\end{equation*}
then $\frac{\partial F}{\partial y}$ is called the \emph{functional derivative} of $F$ with respect to function $y$.
\end{definition}

\begin{theorem} (chain rule) If the function $y$ is controlled by some weights $\theta$ can be rewritten as $y_\theta$. We have
\begin{equation*}
    \frac{\partial F(y_\theta) }{\partial \theta} = \int \frac{\partial F}{\partial y}(x) \frac{\partial y_\theta}{\partial \theta}(x) \mathrm{d} x.
\end{equation*}








\end{theorem}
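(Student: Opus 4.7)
The plan is to derive this identity by chaining two first-order expansions: one governing how $y_\theta$ depends on the parameter $\theta$, and one governing how the functional $F$ depends on its functional argument. Both expansions are already encoded in the preceding definitions, so the argument is essentially bookkeeping once the right perturbation is chosen.

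First I would fix $\theta$ and a direction $\Delta\theta$, and introduce a scalar parameter $h$ so that $\theta\mapsto \theta+h\Delta\theta$. Taylor expanding $y_{\theta+h\Delta\theta}(x)$ in $h$ at $h=0$ gives
\begin{equation*}
y_{\theta+h\Delta\theta}(x) = y_\theta(x) + h\,\eta(x) + \mathcal{O}(h^2),\qquad \eta(x) := \tfrac{\partial y_\theta}{\partial \theta}(x)\,\Delta\theta,
\end{equation*}
so $\eta$ is a well-defined function of $x$ that will serve as the admissible variation for the functional derivative.

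Next I would substitute this into $F$ and apply the definition of the functional derivative with variation $\eta$. Because $F[y+h\eta] = F[y]+h\int \tfrac{\partial F}{\partial y}(x)\eta(x)\,dx + \mathcal{O}(h^2)$, we obtain
\begin{equation*}
F\bigl[y_{\theta+h\Delta\theta}\bigr] = F[y_\theta] + h\int \tfrac{\partial F}{\partial y}(x)\,\tfrac{\partial y_\theta}{\partial \theta}(x)\,\Delta\theta\,dx + \mathcal{O}(h^2).
\end{equation*}
On the other hand, viewing $F(y_\theta)$ as an ordinary function of $\theta$, a first-order Taylor expansion in $\theta$ gives $F(y_{\theta+h\Delta\theta}) = F(y_\theta) + h\,\tfrac{\partial F(y_\theta)}{\partial \theta}\Delta\theta + \mathcal{O}(h^2)$. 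Matching the $\mathcal{O}(h)$ coefficients of the two expansions and using that $\Delta\theta$ was arbitrary yields the claimed identity.

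The main obstacle is regularity: for the step where I substitute the first-order expansion of $y_{\theta+h\Delta\theta}$ into $F$ and declare the remainder to be $\mathcal{O}(h^2)$, I need $F$ to be Fréchet-differentiable (not merely Gâteaux-differentiable) in a norm compatible with the $\mathcal{O}(h^2)$ remainder of $y_{\theta+h\Delta\theta}$, and I need $y_\theta$ to depend smoothly enough on $\theta$ for its Taylor remainder to remain small after composition with $F$. These conditions are standard and implicit in the informal functional-calculus setting used by the paper, so I would state them as background smoothness assumptions rather than prove them from first principles.
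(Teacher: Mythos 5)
The paper itself gives no proof of this statement: it is presented as background material on functional derivatives, with a pointer to Appendix D of \citet{bishop2006pattern}, and is then invoked without further justification in Lemma \ref{lemma2}. Your argument is the standard one and is correct: you compose the first-order expansion of $y_{\theta+h\Delta\theta}$ in $h$ with the defining first-order expansion of $F$ under a variation $\eta$, identify $\eta(x)=\tfrac{\partial y_\theta}{\partial\theta}(x)\,\Delta\theta$ as the admissible variation, and match the $\mathcal{O}(h)$ coefficients against the ordinary Taylor expansion of $\theta\mapsto F(y_\theta)$. This is exactly the derivation the paper's definitions are set up to support. Your caveat is also the right one to flag: the step where the $\mathcal{O}(h^2)$ remainder in $y_{\theta+h\Delta\theta}$ is absorbed into the $\mathcal{O}(h^2)$ remainder of $F$ requires Fr\'echet (rather than merely G\^ateaux) differentiability of $F$, or at least some uniformity of the remainder over perturbation directions; the paper works entirely at the informal level where this is taken for granted, so stating it as a background smoothness assumption is appropriate. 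No gaps beyond that.
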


\section{Detailed Theoretical Analysis of General GAIL}
\label{sec:proofs}

\begin{gather}
\max_D V_D(D,\pi)=\mathbb{E}_\pi [\log D(s,a)]+\mathbb{E}_{\pi_E}[\log(1-D(s,a))]\\
\min_\pi V_\pi(D,\pi)=\mathbb{E}_\pi[ \log D(s,a)]-\lambda \mathbb{E}_\pi[-\log\pi (a\vert s)].
\end{gather}

\begin{lemma}
\label{lemma1} Given that $\pi_\theta$ is a parameterized policy. Define the training objective for entropy-regularized policy optimization as
\begin{equation*}
J(\theta) = \mathbb{E}_{\pi_\theta}[r(s,a)]-\lambda \mathbb{E}_{\pi_\theta}[-\log{\pi_\theta} (a\vert s)].
\end{equation*}
Its gradient satisfies
\begin{equation*}
\frac{\partial}{\partial \theta} J(\theta) = \mathbb{E}_{\pi_\theta}[\frac{\partial \log \pi_\theta(a \vert s)}{\partial \theta}  Q^{\pi_\theta}(s, a) ] = \mathbb{E}_{\pi_\theta}[\frac{\partial \log \pi_\theta(a \vert s)}{\partial \theta}  A_{\text{E}}^{\pi_\theta}(s, a) ],
\end{equation*}
where $Q^{\pi_\theta}(s, a)$ and $A^{\pi_\theta}(s, a)$ are defined as
\begin{equation*}
Q^{\pi_\theta}(s, a) := E_{\pi_\theta}[r(\bar{s},\bar{a}) + \lambda \log \pi_\theta(\bar{a}|\bar{s})| s_0 = s, a_0 = a], \quad  A^{\pi_\theta}(s,a):= Q^{\pi_\theta}(s,a) - \mathbb{E}_{\pi_\theta} Q^{\pi_\theta}(s,a).
\end{equation*}

\end{lemma}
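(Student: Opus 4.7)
The plan is to recognize $J(\theta)$ as an expected discounted return under an entropy-augmented reward $\tilde r(s,a,\theta) := r(s,a) + \lambda \log \pi_\theta(a|s)$ and then apply the classical policy gradient theorem, taking care because the reward itself now depends on $\theta$. First I would rewrite
\begin{equation*}
J(\theta) = \mathbb{E}_{\pi_\theta}[r(s,a)] + \lambda \mathbb{E}_{\pi_\theta}[\log \pi_\theta(a|s)] = \mathbb{E}_{\pi_\theta}[\tilde r(s,a,\theta)],
\end{equation*}
so that the given $Q^{\pi_\theta}(s,a) = \mathbb{E}_{\pi_\theta}[\tilde r(\bar s,\bar a,\theta) \mid s_0=s, a_0=a]$ is exactly the action-value associated with $\tilde r$, and $J(\theta) = \mathbb{E}_{s_0 \sim p_0}[V^{\pi_\theta}(s_0)]$ with $V^{\pi_\theta}(s) := \sum_a \pi_\theta(a|s) Q^{\pi_\theta}(s,a)$.

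Next I would follow the standard Sutton--Barto-style derivation of the policy gradient theorem. Differentiating $V^{\pi_\theta}(s) = \sum_a \pi_\theta(a|s) Q^{\pi_\theta}(s,a)$ by the product rule and expanding $Q^{\pi_\theta}$ via its Bellman equation $Q^{\pi_\theta}(s,a) = \tilde r(s,a,\theta) + \gamma \sum_{s'} \mathcal{P}(s'|s,a) V^{\pi_\theta}(s')$ yields
\begin{equation*}
\nabla_\theta V^{\pi_\theta}(s) = \sum_a \nabla_\theta \pi_\theta(a|s) Q^{\pi_\theta}(s,a) + \sum_a \pi_\theta(a|s) \nabla_\theta \tilde r(s,a,\theta) + \gamma \sum_a \pi_\theta(a|s) \sum_{s'} \mathcal{P}(s'|s,a) \nabla_\theta V^{\pi_\theta}(s').
\end{equation*}
The critical observation — and the one place the entropy regularizer enters non-trivially — is that the middle term vanishes: $\nabla_\theta \tilde r(s,a,\theta) = \lambda \nabla_\theta \log \pi_\theta(a|s)$, so $\sum_a \pi_\theta(a|s) \nabla_\theta \tilde r = \lambda \sum_a \nabla_\theta \pi_\theta(a|s) = \lambda \nabla_\theta 1 = 0$. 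Unrolling the remaining recursion and rewriting $\nabla_\theta \pi_\theta = \pi_\theta \nabla_\theta \log \pi_\theta$ gives $\nabla_\theta J(\theta) = \mathbb{E}_{\pi_\theta}[\nabla_\theta \log \pi_\theta(a|s) Q^{\pi_\theta}(s,a)]$, the first equality of the lemma.

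For the second equality, I would invoke the baseline identity: for any state-dependent function $c(s)$,
\begin{equation*}
\mathbb{E}_{a \sim \pi_\theta(\cdot|s)}\left[\nabla_\theta \log \pi_\theta(a|s) \, c(s)\right] = c(s) \nabla_\theta \sum_a \pi_\theta(a|s) = 0.
\end{equation*}
Taking $c(s) = V^{\pi_\theta}(s) = \mathbb{E}_{a \sim \pi_\theta(\cdot|s)}[Q^{\pi_\theta}(s,a)]$ (which is the interpretation of $\mathbb{E}_{\pi_\theta}[Q^{\pi_\theta}(s,a)]$ consistent with the advantage definition) lets me subtract it inside the outer expectation without changing the value, yielding the $A^{\pi_\theta}$ form.

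The main obstacle is not algebraic but conceptual: the ordinary policy gradient theorem assumes the reward is independent of $\theta$, which is false here because the entropy bonus depends on $\pi_\theta$. I would want to spell out carefully that the extra $\sum_a \pi_\theta \nabla_\theta \tilde r$ term arising from this dependence annihilates exactly by the $\sum_a \nabla_\theta \pi_\theta = 0$ identity, since glossing over it would leave an unjustified step. Everything else is a mechanical application of the chain rule, the Bellman recursion, and the log-derivative trick.
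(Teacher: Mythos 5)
Your proposal is correct and takes essentially the same route as the paper: both arguments hinge on the observation that the $\theta$-dependence of the entropy reward contributes nothing because $\sum_a \nabla_\theta \pi_\theta(a|s) = 0$, after which the policy gradient theorem with the entropy bonus treated as a fixed extra reward, together with the state-value baseline identity, yields the $Q^{\pi_\theta}$ and $A^{\pi_\theta}$ forms. The only cosmetic difference is that the paper carries out this cancellation directly on the occupancy-measure integral and then invokes the Policy Gradient Theorem as a known result, whereas you re-derive that theorem via the Bellman recursion with the cancellation embedded inside it.
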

\begin{proof}
    
\begin{align*}
    \frac{\partial}{\partial \theta} J(\theta) =&  \frac{\partial}{\partial \theta}  \mathbb{E}_{\pi_\theta}[r(s,a)]-\lambda \mathbb{E}_{\pi_\theta}[-\log{\pi_\theta} (a\vert s)]\\
    =& \frac{\partial}{\partial \theta}\int \rho_{\pi_\theta}(s)\pi_\theta(a|s) r(s,a) \mathrm{d} a \mathrm{d} s  + \lambda \frac{\partial}{\partial \theta}\int \rho_{\pi_\theta}(s)\pi_\theta(a|s) \log \pi_\theta(a|s) \mathrm{d} a \mathrm{d} s \\
    =& \int \frac{\partial \rho_{\pi_\theta}(s)\pi_\theta(a|s)}{\partial \theta}  r(s,a) \mathrm{d} a \mathrm{d} s  +  \lambda \int \frac{\partial \rho_{\pi_\theta}(s)\pi_\theta(a|s)}{\partial \theta} \log \pi_\theta(a|s) \mathrm{d} a \mathrm{d} s + \lambda \int  \rho_{\pi_\theta}(s)\pi_\theta(a|s) \frac{\partial \log \pi_\theta(a|s)}{\partial \theta} \mathrm{d} a \mathrm{d} s \\
    =& \int \frac{\partial \rho_{\pi_\theta}(s)\pi_\theta(a|s)}{\partial \theta}  [r(s,a) + \lambda \log \pi_\theta(a|s)]\mathrm{d} a \mathrm{d} s + \lambda \int  \rho_{\pi_\theta}(s) \pi_\theta(a|s) \frac{1}{\pi_\theta(a|s)}\frac{\partial \pi_\theta(a|s)}{\partial \theta} \mathrm{d} a \mathrm{d} s \\
    =& \int \frac{\partial \rho_{\pi_\theta}(s)\pi_\theta(a|s)}{\partial \theta}  [r(s,a) + \lambda \log \pi_\theta(a|s)]\mathrm{d} a \mathrm{d} s + \lambda \int  \rho_{\pi_\theta}(s) \frac{\partial}{\partial \theta} \int \pi_\theta(a|s) \mathrm{d} a \mathrm{d} s \\
    =& \int \frac{\partial \rho_{\pi_\theta}(s)\pi_\theta(a|s)}{\partial \theta}  [r(s,a) + \lambda \log \pi_\theta(a|s)]\mathrm{d} a \mathrm{d} s\\
    =& \frac{\partial \mathbb{E}_{\pi_\theta}[r(s,a)+\lambda\log \pi_{\theta'}(a|s)]}{\partial \theta}|_{\theta'=\theta} \\
\end{align*}

The above derivation suggests that we can view the entropy term as an additional fixed reward $r'(s, a)= \lambda \log\pi_\theta(a|s)$. Applying the Policy Gradient Theorem, we have

\begin{equation*}
\frac{\partial}{\partial \theta} J(\theta) = \mathbb{E}_{\pi_\theta}[\frac{\partial \log \pi_\theta(a \vert s)}{\partial \theta}  Q^{\pi_\theta}(s, a)] = \mathbb{E}_{\pi_\theta}[\frac{\partial \log \pi_\theta(a \vert s)}{\partial \theta}  A^{\pi_\theta}(s, a)],
\end{equation*}
where $Q^{\pi_\theta}$ is similar to the classic Q-function but with an extra ``entropy reward'' term. 
\end{proof}

\begin{lemma}
\label{lemma2} The functional derivatives for the two optimization objectives
\begin{equation*}
V_D(D,\pi)=\mathbb{E}_\pi [\log D(s,a)]+\mathbb{E}_{\pi_E}[\log(1-D(s,a))]
\end{equation*}
\begin{equation*}
   V_\pi(D,\pi)=\mathbb{E}_\pi[ \log D(s,a)]-\lambda \mathbb{E}_\pi[-\log\pi (a\vert s)]
\end{equation*}
respectively satisfy
\begin{equation*}
    \frac{\partial V_D}{\partial D}  = \frac{\rho_{\pi}(s)\pi(a|s)}{D(s,a)} - \frac{\rho_{\pi_E}(s)\pi_E(a|s)}{1-D(s,a)}.
\end{equation*}
\begin{equation*}
    \frac{\partial V_\pi}{\partial \pi}  = \rho_\pi(s)A^\pi(s,a).
\end{equation*}
where $A^\pi$ follows the same definition as in Lemma \ref{lemma1}.
\begin{equation*}
Q^{\pi}(s, a) := E_{\pi}[\log D(\bar{s},\bar{a}) + \lambda \log \pi(\bar{a}|\bar{s})| {s_0 =s, a_0=a}], \quad  A^\pi(s,a):= Q^\pi(s,a) - \mathbb{E}_{\pi(a|s)} Q^\pi(s,a).
\end{equation*}
\end{lemma}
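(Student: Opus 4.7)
The two formulas should be established separately, since the discriminator objective $V_D$ depends on $D$ only pointwise, while the policy objective $V_\pi$ has a more delicate dependence on $\pi$ through the occupancy measure $\rho_\pi$.

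For the discriminator derivative, my plan is to first rewrite both expectations as explicit integrals against the occupancy measures:
\begin{equation*}
V_D(D,\pi) = \int\!\!\int \rho_\pi(s)\pi(a|s)\log D(s,a)\,ds\,da + \int\!\!\int \rho_{\pi_E}(s)\pi_E(a|s)\log(1-D(s,a))\,ds\,da.
\end{equation*}
Since $D$ appears only as a pointwise argument of $\log(\cdot)$ and $\log(1-\cdot)$, and the measures $\rho_\pi(s)\pi(a|s)$ and $\rho_{\pi_E}(s)\pi_E(a|s)$ do not depend on $D$, I can apply the definition of the functional derivative directly: perturbing $D$ by $h\eta(s,a)$, differentiating in $h$ at $0$, and reading off the integrand against $\eta$. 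This immediately yields $\partial V_D/\partial D(s,a) = \rho_\pi(s)\pi(a|s)/D(s,a) - \rho_{\pi_E}(s)\pi_E(a|s)/(1-D(s,a))$. This step is routine.

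For the policy derivative, the difficulty is that $\pi$ enters $V_\pi$ both through the sampling distribution of $(s,a)$ (via $\rho_\pi$ and $\pi(a|s)$) and through the explicit $\log\pi$ term in the entropy. A direct pointwise variation is not available because perturbing $\pi(a|s)$ at one $(s,a)$ changes the entire downstream occupancy $\rho_\pi$ through the MDP transitions. The plan is to reduce the problem to Lemma \ref{lemma1} by parameterizing $\pi = \pi_\theta$ and expressing $V_\pi(D,\pi_\theta)$ exactly in the form $J(\theta) = \mathbb{E}_{\pi_\theta}[r(s,a)] - \lambda \mathbb{E}_{\pi_\theta}[-\log\pi_\theta(a|s)]$ with the identification $r(s,a)=\log D(s,a)$. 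Lemma \ref{lemma1} then gives
\begin{equation*}
\frac{\partial V_\pi(D,\pi_\theta)}{\partial \theta} = \mathbb{E}_{\pi_\theta}\!\left[\frac{\partial \log\pi_\theta(a|s)}{\partial\theta}\,A^{\pi_\theta}(s,a)\right] = \int\!\!\int \rho_{\pi_\theta}(s)\,\frac{\partial \pi_\theta(a|s)}{\partial\theta}\,A^{\pi_\theta}(s,a)\,ds\,da,
\end{equation*}
where I used $\pi_\theta \,\partial_\theta \log\pi_\theta = \partial_\theta \pi_\theta$.

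Finally, I compare this with the chain rule for functional derivatives,
\begin{equation*}
\frac{\partial V_\pi(D,\pi_\theta)}{\partial \theta} = \int\!\!\int \frac{\partial V_\pi}{\partial \pi}(s,a)\,\frac{\partial \pi_\theta(a|s)}{\partial\theta}\,ds\,da,
\end{equation*}
and since the identity must hold for every admissible parameterization $\pi_\theta$, I can equate integrands to conclude $\partial V_\pi/\partial \pi(s,a) = \rho_\pi(s)A^\pi(s,a)$. The main obstacle is the policy case: all the subtle dependence of $V_\pi$ on $\pi$ through the MDP dynamics is hidden inside Lemma \ref{lemma1}, and the cleanest way to expose the pointwise derivative is this detour through a parameterization and the matching argument, rather than a naive pointwise variation of $\pi(a|s)$ (which would neglect the transition-induced coupling and the normalization constraint).
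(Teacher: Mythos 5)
Your proposal is correct and follows essentially the same route as the paper's proof: the discriminator derivative is read off directly from the integral form of $V_D$, and the policy derivative is obtained by parameterizing $\pi=\pi_\theta$, invoking Lemma \ref{lemma1} with $r(s,a)=\log D(s,a)$, and matching against the functional chain rule $\frac{\partial V_\pi}{\partial \theta} = \int \frac{\partial V_\pi}{\partial \pi}\frac{\partial \pi_\theta}{\partial \theta}\,da\,ds$ to identify $\frac{\partial V_\pi}{\partial \pi}=\rho_\pi(s)A^\pi(s,a)$. Your added remark that the identification is justified because the identity holds for every admissible parameterization makes explicit a step the paper leaves implicit, but the substance is the same.
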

\begin{proof}
Regarding $\frac{\partial V_D}{\partial D}$, by definition of $\mathbb{E}_\pi$ we have
\begin{equation*}
V_D(D,\pi)=\int \rho_{\pi}(s)\pi(a|s) \log D(s,a)  \mathrm{d} a \mathrm{d} s +\int \rho_{\pi_E}(s)\pi_E(a|s)\log(1-D(s,a))\mathrm{d} a \mathrm{d} s
\end{equation*}

according to the \textit{chain rule} \cite{greiner2013field} of functional derivative, we have
\begin{align*}
    \frac{\partial V_D}{\partial D} = \frac{\rho_{\pi}(s)\pi(a|s)}{D(s,a)} - \frac{\rho_{\pi_E}(s)\pi_E(a|s)}{1-D(s,a)}
\end{align*}

Regarding $\frac{\partial V_\pi}{\partial \pi}$, suppose $\pi$ is parameterized by $\theta$. The chain rule for functional derivative states
\begin{equation*}
    \frac{\partial V_\pi}{\partial \theta} = \int \frac{\partial V_\pi}{\partial \pi} \frac{\partial \pi}{\partial \theta} \mathrm{d} a \mathrm{d} s.
\end{equation*}
According to Lemma \ref{lemma1}, we have
\begin{align*}
    \frac{\partial V_\pi}{\partial \theta} =& \mathbb{E}_{\pi}[\frac{\partial \log \pi(a \vert s)}{\partial \theta}  A^{\pi}(s, a) ] \\
    =& \int \rho_{\pi}(s)\pi(a|s) \frac{\partial \log \pi(a \vert s)}{\partial \theta}  A^{\pi}(s, a) \mathrm{d} a \mathrm{d} s \\ 
    =& \int \rho_\pi(s) \frac{\partial \pi(a \vert s)}{\partial \theta}  A^{\pi}(s, a) \mathrm{d} a \mathrm{d} s.
\end{align*}
Therefore, we have
\begin{equation*}
    \frac{\partial V_\pi}{\partial \pi}  = \rho_\pi(s)A^\pi(s,a) = \rho_\pi(s)[Q^{\pi}(s,a) - \mathbb{E}_\pi Q^{\pi}(s,a)] .
\end{equation*}
\end{proof}

\begin{proposition} \label{propneq0}
The constrained optimization problem
\begin{equation*}
   \min_\pi V_\pi(D,\pi)=\mathbb{E}_\pi[ \log D(s,a)]-\lambda \mathbb{E}_\pi[-\log\pi (a\vert s)] \quad s.t. \int \pi(a|s) = 1
\end{equation*}
does not converge when $\pi=\pi_\text{E}$ and $D(s,a) = \frac{1}{2}$ for $\forall s, a$. Namely, 
\begin{equation*}
    \frac{\partial V_\pi}{\partial \pi}|_{\pi(s,a)=\pi_\textbf{E}(s,a), D(s,a) = \frac{1}{2}}  \neq 0.
\end{equation*}
When $\pi=\pi_\text{E}$ and $D(s,a) = \frac{1}{2}$, we have
\begin{align*}
    Q^{\pi}(s,a) = & E_{\pi_\text{E}}[\lambda \log \pi_\text{E}(\bar{a}|\bar{s}) - \log 2 | s_0=s, a_0=a] \\
    =& \sum_{n=0}^{\infty} \gamma^n \int p(s_n = \bar{s}| s_0=s, a_0=a)  \int  \pi_\text{E}(\bar{a}|\bar{s}) [\lambda \log \pi_\text{E}(\bar{a}|\bar{s}) - \log 2] \mathrm{d} \bar{a} \mathrm{d} \bar{s} \\
    =& -\sum_{n=0}^{\infty} \gamma^n \int p(s_n = \bar{s}| s_0=s, a_0=a) [\lambda H(\pi_\text{E}(\cdot|\bar{s})) + \log 2]  \mathrm{d} \bar{s}
\end{align*}

\begin{align*}
    A^{\pi}(s,a) =& Q^{\pi}(s,a) - \mathbb{E}_\pi Q^{\pi}(s,a) \\
    =& \sum_{n=0}^{\infty} \gamma^n [p(s_n = \bar{s}| s_0=s) - p(s_n = \bar{s}| s_0=s, a_0=a)] \lambda H(\pi_\text{E}(\cdot|\bar{s}))
\end{align*}

According to Lemma \ref{lemma2},

\begin{equation*}
    \frac{\partial V_\pi}{\partial \pi} = \rho_{\pi_E}(s)A^{\pi_E}(s,a) = \rho_{\pi_E}(s)A^{\pi_E}(s,a)
\end{equation*}



Recall that we have $A^{\pi_E}(s,a)=\sum_{n=0}^{\infty} \gamma^n [p_{\pi_E}(s_n = \bar{s}| s_0=s) - p_{\pi_E}(s_n = \bar{s}| s_0=s, a_0=a)] \lambda H(\pi_\text{E}(\cdot|\bar{s}))$. Since $\pi_\text{E}$ can by any policy distribution determined by the expert dataset, $H(\pi_\text{E}(\cdot|\bar{s}))$ can be any value for various $s$ and $a$. Additionally, for different actions $a_1 \neq a_2$, we cannot guarantee $p_{\pi_E}(s_n = \bar{s}| s_0=s, a_0=a_1) = p_{\pi_E}(s_n = \bar{s}| s_0=s, a_0=a_2)$. Thus $\frac{\partial V_\pi}{\partial \pi}$ is not a constant and relies on action $a$. $A^{\pi_E}(s,a)=0$ cannot hold, and thus $\frac{\partial V_\pi}{\partial \pi} \neq 0$.
\end{proposition}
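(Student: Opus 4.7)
The plan is to directly compute $\frac{\partial V_\pi}{\partial \pi}$ at the desired state using Lemma~\ref{lemma2}, and then exhibit that the resulting expression is generically nonzero. The starting point is the identity $\frac{\partial V_\pi}{\partial \pi} = \rho_{\pi}(s) A^{\pi}(s,a)$, which reduces the whole question to analyzing the advantage function $A^{\pi_E}(s,a)$ when the discriminator is pinned at $D \equiv 1/2$.

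First I would evaluate $Q^{\pi_E}$ at the desired state. Plugging $D(\bar s,\bar a) = 1/2$ into the definition $Q^{\pi}(s,a) = \mathbb{E}_{\pi}[\log D(\bar s,\bar a) + \lambda \log \pi(\bar a|\bar s)\mid s_0 = s, a_0 = a]$ gives a clean expression: the reward along the trajectory becomes $-\log 2 + \lambda \log \pi_E(\bar a|\bar s)$. Writing the discounted sum as an integral against $\sum_{n=0}^\infty \gamma^n p(s_n = \bar s \mid s_0 = s, a_0 = a)$ and then integrating $\bar a$ against $\pi_E(\bar a|\bar s)$ converts the $\lambda \log \pi_E$ term into $-\lambda H(\pi_E(\cdot|\bar s))$. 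This yields
\[
Q^{\pi_E}(s,a) = -\sum_{n=0}^\infty \gamma^n \int p(s_n=\bar s \mid s_0=s, a_0=a)\bigl[\lambda H(\pi_E(\cdot|\bar s)) + \log 2\bigr] d\bar s.
\]

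The next step is to form the advantage $A^{\pi_E}(s,a) = Q^{\pi_E}(s,a) - \mathbb{E}_{a \sim \pi_E(\cdot|s)}[Q^{\pi_E}(s,a)]$. The constant term $\log 2$ cancels on subtraction because $\int p(s_n = \bar s\mid s_0 = s, a_0 = a)\,d\bar s = 1$ for every $a$, so only the entropy-weighted difference in state-visitation distributions survives:
\[
A^{\pi_E}(s,a) = \sum_{n=0}^\infty \gamma^n \int \bigl[p_{\pi_E}(s_n = \bar s \mid s_0 = s) - p_{\pi_E}(s_n = \bar s \mid s_0 = s, a_0 = a)\bigr] \lambda H(\pi_E(\cdot|\bar s))\, d\bar s.
\]
This is the crux of the argument: the advantage is an integral against $\lambda H(\pi_E(\cdot|\bar s))$, which in general is not a state-independent constant.

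The last step is a non-triviality argument. I would point out that nothing forces the advantage to vanish: the expert policy $\pi_E$ is arbitrary, so $H(\pi_E(\cdot|\bar s))$ can depend nontrivially on $\bar s$, and in any nondegenerate MDP the conditional distributions $p_{\pi_E}(s_n = \bar s \mid s_0 = s, a_0 = a)$ genuinely depend on the initial action $a$, so the signed measure in brackets is not the zero measure for all $(s,a)$. Hence $A^{\pi_E}(s,a)$ is a nonconstant function of $a$, and in particular cannot be identically zero. Multiplying by $\rho_{\pi_E}(s) \ge 0$ (which is strictly positive on reachable states) preserves this, so $\frac{\partial V_\pi}{\partial \pi}|_{\pi = \pi_E, D = 1/2} \neq 0$. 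The main obstacle here is not the computation itself but articulating the non-degeneracy argument precisely enough: we need the proof to make clear that, modulo pathological MDPs, the residual entropy-weighted term cannot accidentally vanish for every $(s,a)$.
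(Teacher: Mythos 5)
Your proposal follows essentially the same route as the paper's own argument: evaluate $Q^{\pi_E}$ at $D\equiv\tfrac12$, observe that the $\log 2$ term cancels in the advantage so that only the entropy-weighted difference of state-visitation distributions survives, invoke Lemma~\ref{lemma2} to write $\frac{\partial V_\pi}{\partial \pi}=\rho_{\pi_E}(s)A^{\pi_E}(s,a)$, and conclude by a genericity argument that this cannot vanish identically. Your version is, if anything, slightly more careful (you make the integral over $\bar s$ explicit and flag that the non-degeneracy step is the real content), but it is the same proof.
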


\begin{corollary} \label{corr_no_entropy}
This is a corollary of Proposition \ref{propneq0}.
When $\pi(s,a)=\pi_{E}(s,a), D(s,a) = \frac{1}{2}$, and the entropy term is excluded from the GAIL objective, we find, $\frac{\partial V_\pi}{\partial \pi}= 0$.
\end{corollary}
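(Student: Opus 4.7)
The plan is to retrace the calculation of $\frac{\partial V_\pi}{\partial \pi}$ from Lemma \ref{lemma2} and Proposition \ref{propneq0}, but with $\lambda = 0$. By Lemma \ref{lemma2} the functional derivative still factors as
\[
\frac{\partial V_\pi}{\partial \pi}(s,a) = \rho_{\pi}(s)\,A^{\pi}(s,a),
\]
where now $Q^{\pi}(s,a) = \mathbb{E}_{\pi}[\log D(\bar s,\bar a)\mid s_0=s,a_0=a]$ with the entropy contribution stripped out. So the task reduces to showing that $A^{\pi_E}(s,a)=0$ at the desired state.

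First I would substitute $D(s,a)=\tfrac{1}{2}$ everywhere in the definition of $Q^{\pi_E}$. Because $\log \tfrac{1}{2}$ is a constant that no longer depends on the rolled-out state-action pair $(\bar s,\bar a)$, the inner expectation collapses to a geometric series over $\gamma$, giving
\[
Q^{\pi_E}(s,a) = \sum_{n=0}^{\infty}\gamma^{n}\log\tfrac{1}{2} = \frac{\log \tfrac{1}{2}}{1-\gamma},
\]
independent of $(s,a)$. This is the crux: with no entropy term, the only source of $(s,a)$-dependence in $Q^{\pi_E}$ in Proposition \ref{propneq0} was the $\lambda H(\pi_E(\cdot\mid\bar s))$ integrand, and removing it renders $Q^{\pi_E}$ a constant function of the initial state-action pair.

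Next I would use the definition $A^{\pi}(s,a) = Q^{\pi}(s,a) - \mathbb{E}_{\pi(\cdot|s)}[Q^{\pi}(s,a)]$. Any constant function has zero advantage, since subtracting its own conditional expectation yields zero. Therefore $A^{\pi_E}(s,a) = 0$ for all $(s,a)$, and multiplying by $\rho_{\pi_E}(s)$ gives $\frac{\partial V_\pi}{\partial \pi} = 0$ pointwise.

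There is no real obstacle here; the corollary is essentially a sanity-check that isolates the entropy term as the sole culprit in the non-equilibrium result of Proposition \ref{propneq0}. The only mild subtlety is to be explicit that the functional derivative in Lemma \ref{lemma2} was derived without assuming $\lambda\neq 0$, so the same formula applies verbatim after setting $\lambda=0$. I would close by remarking that this confirms the claim in Section \ref{sec:notconverge} that the desired state fails to be an equilibrium exactly because of the entropy regularizer, motivating the controller design in Section \ref{control} which preserves the entropy term while restoring equilibrium through $u_2(t)$.
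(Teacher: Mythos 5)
Your proposal is correct and follows essentially the same route as the paper: the paper simply plugs $\lambda=0$ into the expression $A^{\pi_E}(s,a)=\sum_{n}\gamma^{n}\bigl[p(s_n=\bar{s}\mid s_0=s)-p(s_n=\bar{s}\mid s_0=s,a_0=a)\bigr]\lambda H(\pi_E(\cdot\mid\bar{s}))$ already derived in Proposition \ref{propneq0}, where the explicit $\lambda$ factor kills the advantage. Your version just re-derives the same fact one step earlier---observing that with $\lambda=0$ and $D=\tfrac{1}{2}$ the function $Q^{\pi_E}$ collapses to the constant $\log\tfrac{1}{2}/(1-\gamma)$, so the advantage vanishes---which is the same cancellation the paper performs inside Proposition \ref{propneq0}.
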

\begin{proof}
    Exclusion of the entropy term can be achieved by setting $\lambda = 0$. Then we have, 
    \begin{align}
        \frac{\partial V_\pi}{\partial \pi} =& \rho_{\pi_E}(s)A^{\pi_E}(s,a)\\ 
        =&  \rho_{\pi_E}(s)\sum_{n=0}^{\infty} \gamma^n [p(s_n = \bar{s}| s_0=s) - p(s_n = \bar{s}| s_0=s, a_0=a)] \lambda H(\pi_\text{E}(\cdot|\bar{s}))\\
        =& 0
    \end{align} 
\end{proof}

\begin{proposition} \label{propneq1}
The optimization problem 
\begin{equation*}
    \max_D V_D(D,\pi)=\mathbb{E}_\pi [\log D(s,a)]+\mathbb{E}_{\pi_E}[\log(1-D(s,a))]
\end{equation*}
converges when $\pi=\pi_\text{E}$ and $D(s,a) = \frac{1}{2}$ for $\forall s, a$. Namely, 
\begin{equation*}
    \frac{\partial V_D}{\partial D}|_{\pi(s,a)=\pi_\textbf{E}(s,a), D(s,a) = \frac{1}{2}}  = 0.
\end{equation*}
\end{proposition}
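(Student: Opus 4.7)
The plan is to prove this proposition by direct substitution into the closed-form functional derivative of $V_D$ that was already established in Lemma \ref{lemma2}. That lemma gives
$$\frac{\partial V_D}{\partial D}(s,a) \;=\; \frac{\rho_{\pi}(s)\pi(a|s)}{D(s,a)} \;-\; \frac{\rho_{\pi_E}(s)\pi_E(a|s)}{1-D(s,a)},$$
so essentially the entire task reduces to plugging in $\pi = \pi_E$ and $D(s,a) = 1/2$ and checking that the two terms cancel pointwise in $(s,a)$.

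First I would observe that, since the state occupancy measure $\rho_\pi(s)$ is defined purely in terms of the policy $\pi$ together with the fixed MDP ingredients $\mathcal{P}$ and $p_0$ (see Sec.~\ref{setting}), the hypothesis $\pi = \pi_E$ forces $\rho_\pi(s) = \rho_{\pi_E}(s)$ for every state, as well as $\pi(a|s) = \pi_E(a|s)$. This makes the two numerators in the displayed expression identical, both equal to $\rho_{\pi_E}(s)\pi_E(a|s)$. Next I would note that $D(s,a) = 1/2$ trivially gives $1 - D(s,a) = 1/2$, so the two denominators also coincide.

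Combining these two observations yields
$$\frac{\partial V_D}{\partial D}(s,a)\Big|_{\pi=\pi_E,\,D=1/2} \;=\; \frac{\rho_{\pi_E}(s)\pi_E(a|s)}{1/2} - \frac{\rho_{\pi_E}(s)\pi_E(a|s)}{1/2} \;=\; 0,$$
pointwise in $(s,a)$, which is exactly the claim that the discriminator's portion of the desired state is stationary under its gradient flow.

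Unlike Proposition \ref{propneq0}, where the non-vanishing of $\partial V_\pi / \partial \pi$ hinged on a nontrivial argument showing that the entropy bonus $\lambda H(\pi_E(\cdot|\bar{s}))$ produces a state- and action-dependent advantage which cannot be driven to zero, the present statement has no analogous subtlety. The only genuine content is invoking Lemma \ref{lemma2}; the rest is bookkeeping. I therefore do not anticipate any real obstacle — the purpose of this proposition is to round out the dynamical-systems analysis by confirming that it is the policy equation, and not the discriminator equation, that is solely responsible for the failure of the desired state to be an equilibrium of the joint system in Eq.~(\ref{bianfen2}).
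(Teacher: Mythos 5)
Your proposal is correct and follows essentially the same route as the paper: both substitute $\pi=\pi_E$ (so the occupancy measures and policies coincide) and $D=\tfrac{1}{2}$ into the functional derivative of $V_D$ and observe that the two terms cancel. If anything, your version is slightly more careful, since you use the pointwise formula from Lemma~\ref{lemma2} directly, whereas the paper writes the same cancellation in a looser expectation-style shorthand.
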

\begin{proof}
According to the chain rule of functional derivative, we have
\begin{align*}
    \frac{\partial V_D}{\partial D} &=  \frac{\partial \mathbb{E}_\pi [\log D]+\mathbb{E}_{\pi_E}[\log(1-D)]}{\partial D}\\
    &=   \mathbb{E}_\pi [\frac{1}{D}]-\mathbb{E}_{\pi_E}[\frac{1}{1-D}]\\
    &=  \mathbb{E}_{\pi_E}[\frac{1}{D} - \frac{1}{1-D}]\\
    &=  \mathbb{E}_{\pi_E}[2 - 2]\\
    &=  0
\end{align*}

\end{proof}

\section{Proof of Theorem \ref{thm_asym_stable}} \label{appB}
\begin{theorem}
    Let assumption \ref{assumption} holds. The training dynamic of GAIL in Eq. (\ref{vector}) is \textbf{asymptotically stable}.
 \end{theorem}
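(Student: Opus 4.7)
The plan is to apply Corollary~\ref{coro}, which reduces verifying asymptotic stability to two scalar inequalities on the Jacobian of $f$ at the equilibrium: $\det(\mathbb{J}(f(z^*))) > 0$ and $\operatorname{trace}(\mathbb{J}(f(z^*))) < 0$. The argument then naturally splits into three steps.

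First, I would verify that the desired state $z^* = (\tfrac{1}{2}, E)^\top$ is an equilibrium of the controlled system in Eq.~(\ref{vector}), i.e.\ that $f(z^*) = 0$. For the first coordinate, the linear feedback term $-k(x^* - \tfrac{1}{2})$ vanishes by construction, and the uncontrolled part gives $\tfrac{cE}{1/2} + \tfrac{cE}{-1/2} = 2cE - 2cE = 0$. For the second coordinate, the controller $u_2(t)$ in Eq.~(\ref{eq_u2}) was engineered precisely to cancel each remaining term at $(x,y) = (\tfrac{1}{2}, E)$: the constant pieces $c\log\tfrac{1}{2} + c\lambda \log E + c\lambda$ kill $-c\log x^* - c\lambda\log y^* - c\lambda$, while $\alpha \tfrac{y^*}{E} - \alpha = 0$. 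So $f(z^*)=0$ as needed.

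Second, I would linearize around $z^*$. A direct computation of the four partials gives
\[
\mathbb{J}(f(z^*)) \;=\; \begin{pmatrix} -\dfrac{cy^*}{(x^*)^2} - \dfrac{cE}{(x^*-1)^2} - k & \dfrac{c}{x^*} \\[6pt] -\dfrac{c}{x^*} & -\dfrac{c\lambda}{y^*} + \dfrac{\alpha}{E} \end{pmatrix}
\]
which, upon substituting $x^* = \tfrac{1}{2}$, $y^* = E$, produces a constant $2\times 2$ matrix whose entries depend only on $c,\lambda,\alpha,k,E$. I would then read off its trace and determinant in closed form.

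Third, I would use Assumption~\ref{assumption} to conclude. The second inequality $k^2 + 32c(\alpha - c\lambda) < 0$ forces $\alpha - c\lambda < 0$, which combined with $k>0$ and $c,E>0$ immediately makes the trace (a sum of nonpositive and strictly negative pieces) negative. The first inequality $8c^2\lambda - 8c\alpha - 4c^2 + ck\lambda - k\alpha > 0$ is the algebraic condition ensuring the determinant is positive after clearing the $1/E$ denominator and simplifying. Applying Corollary~\ref{coro} then yields asymptotic stability via Theorem~\ref{stabletheorem}.

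The main obstacle I anticipate is purely algebraic: cleanly matching the determinant expression obtained from $\mathbb{J}(f(z^*))$ to the particular grouping of terms appearing in Assumption~\ref{assumption}, and checking that the assumption is self-consistent (i.e.\ that the parameter set $\{(\alpha,k) : k>0$, both inequalities hold$\}$ is nonempty for admissible $c,\lambda,E$). Once the bookkeeping is done, both sign conditions fall out, and the invocation of the linearized stability criterion is immediate.
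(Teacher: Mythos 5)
Your proposal follows essentially the same route as the paper's proof: verify that $(\tfrac12,E)^\top$ is an equilibrium, compute the same $2\times 2$ Jacobian at that point, and use Assumption~\ref{assumption} to obtain negative trace and positive determinant before invoking Corollary~\ref{coro} and Theorem~\ref{stabletheorem}. The only step you leave implicit---checking that the determinant's numerator, which is linear in $E$, stays positive over the admissible range of $E$---is finished exactly as you anticipate, using the first inequality together with $c\lambda-\alpha>0$ (which you already extracted from the second inequality), and your direct sign argument for the trace is if anything slightly cleaner than the paper's vertex computation.
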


\begin{proof}
    To analyze the convergence and stability behavior of system \ref{vector}, first we need to verify definition \ref{eqdef} to make sure our goal functions are equilibrium points. Then we apply theorem \ref{stabletheorem} to prove system \ref{vector} is asymptotically stable. 
    Notice that $z^*(t) = (\frac{1}{2}, E)\top$, then we substitute this goal function to system \ref{vector}
    \begin{equation*}
        d (z^*(t)) = f(z^*(t)) = 0
    \end{equation*}
    
    We the compute the linearized system near the goal function such that 
    \begin{equation}
        d (z(t)) = \mathbb{J}(f(z^*(t))) z(t) d t,
    \end{equation}
    where $\mathbb{J}$ is the Jacobian of function $f$. Therefore,
    \begin{equation}
        \mathbb{J} (f(z^*(t))) = \begin{pmatrix}
            -\frac{cy(t)}{x(t)^2} - \frac{cE}{(x(t)-1)^2}-k& \frac{c}{x(t)}\\
            -\frac{c}{x(t)} & -\frac{c\lambda}{y(t)} +\frac{\alpha}{E}
         \end{pmatrix}_{(\frac{1}{2}, E)},
    \end{equation}
    which after evaluation becomes
    \begin{equation}
         \mathbb{J}( f(z^*(t))) = \begin{pmatrix}
             -8cE-k & 2c\\
             -2c &   \frac{-c\lambda + \alpha}{E}
         \end{pmatrix}
    \end{equation}
    Then we compute the determinate and trace of $\mathbb{J} (f(z^*(t)))$, which 
    \begin{equation}
        det(\mathbb{J} (f(z^*(t)))) = \frac{(8c^2\lambda -8c\alpha -4c
        ^2)E + (ck\lambda -k\alpha)}{E}
    \end{equation}
    \begin{equation}
        trace(\mathbb{J} (f(z^*(t)))) = \frac{-8cE^2 - kE - c\lambda +\alpha}{E}
    \end{equation}
Since $E = \pi_E(a|s)$ has range $[0,1]$, therefore we have $det(\mathbb{J} (f(z^*(t)))) > 0$, if 
\begin{equation}
    ck\lambda - k\alpha >0
\end{equation}
\begin{equation}
    8c^2\lambda -8c\alpha -4c
        ^2 +ck\lambda -k\alpha > 0
\end{equation}

The graph of $trace(\mathbb{J}(( f(z^*(t)))))$ is also a downward hyperbola with middle point $(\frac{-k}{16c}, \frac{k^2 +32c(-c\lambda + \alpha)}{32c})$. Therefore, $trace(\mathbb{J} ((f(z^*(t)))) )< 0$, if 
\begin{equation}
    \frac{k^2 +32c(-c\lambda + \alpha)}{32c} < 0.
\end{equation}

Note that this implies $ck\lambda - k\alpha >0$, since
\begin{align}         
\frac{k^2 +32c(-c\lambda + \alpha)}{32c} < 0 \\        
\frac{k^2}{32c} - c\lambda + \alpha < 0 \\
-\frac{k^2}{32c} + c\lambda - \alpha > 0 \\
c\lambda - \alpha > \frac{k^2}{32c} > 0 .
\end{align}

As a result, system \ref{vector} is asymptotically stable if assumptions \ref{assumption} hold.
    
\end{proof}

\newpage

\section{Ablation on $k$}\label{appD}

\begin{figure}[h!]
    \centering
    \includegraphics[scale=0.42]{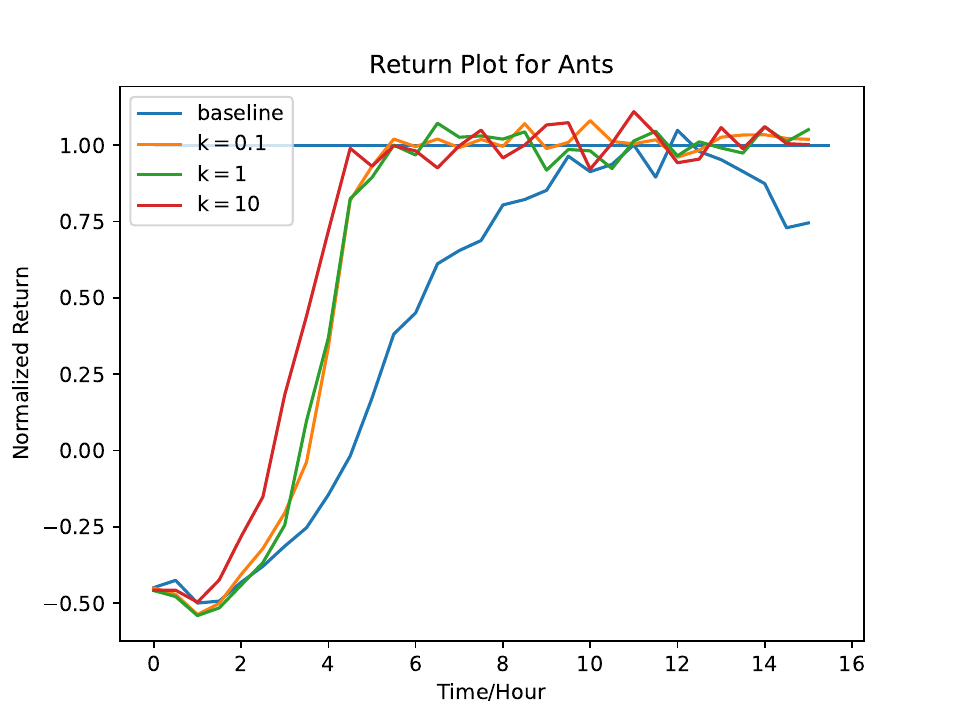}
    \includegraphics[scale=0.42]{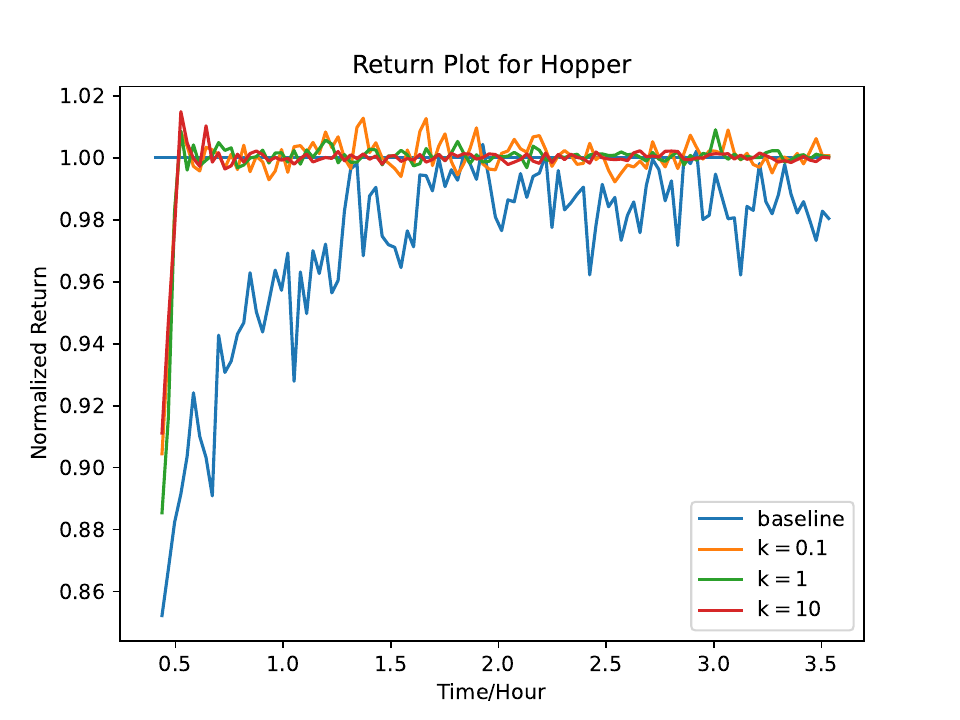}
    \includegraphics[scale=0.42]{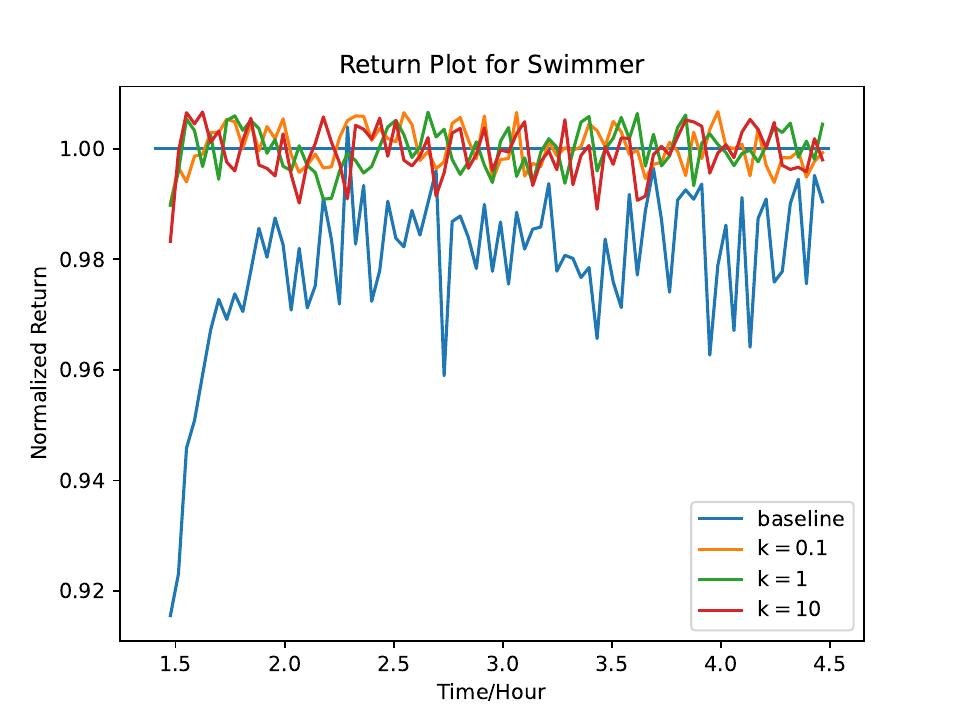}
    \includegraphics[scale=0.42]{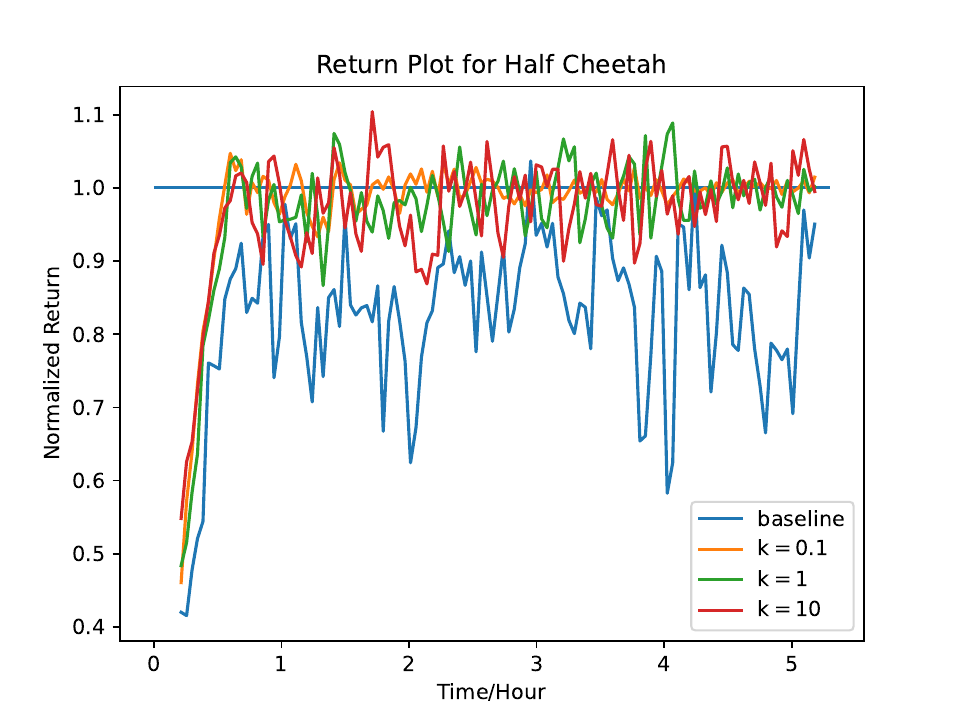}
    \includegraphics[scale = 0.42]{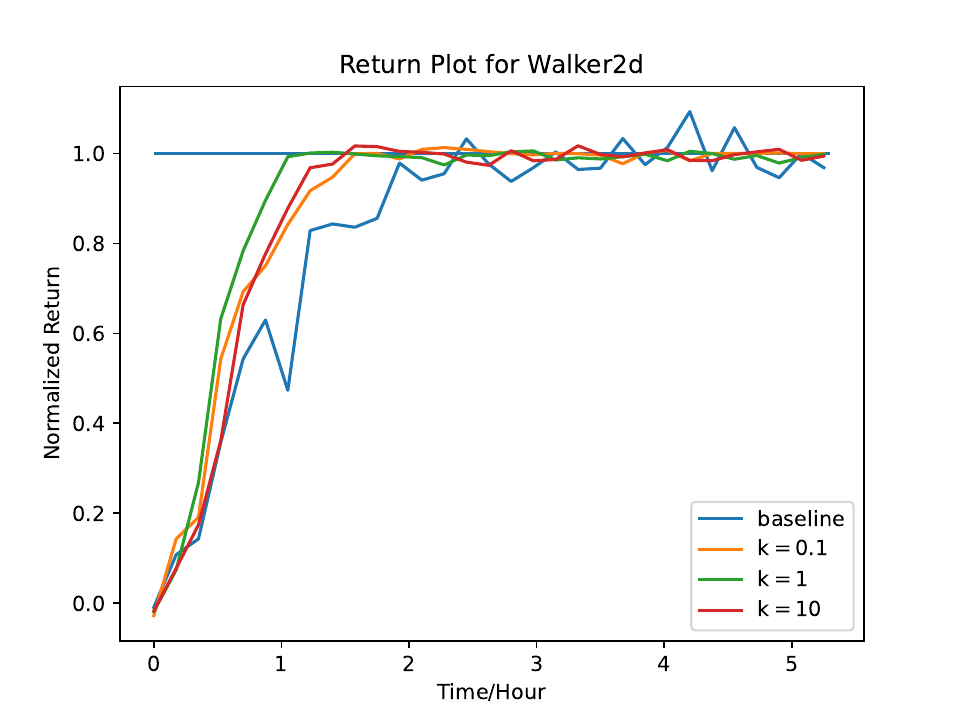}
    \caption{Normalized returns curves for controlled GAIL with $k = 0.1$, $k = 1$, and $k = 10$ on MuJoCo environments, where on the y-axis, 1 represents expert policy return and 0 represents random policy return}
    \label{fig:return}
\end{figure} 

\newpage

\section{Comparison with Other IL Methods} \label{suppexp}



\begin{table}[h!]
    \centering
 
    \begin{tabular}{c|c|c|c|c}
        \toprule
        & BC+GAIL & WAIL & GAIL-DAC & Ours \\
        \hline
        Expert Trajectories & 200 & 4 & 4 & 4 \\
        \hline
        Half-Cheetah & 4558.09 ± 89.50 & 3660.49 ± 217.60 & 5097.51 ± 62.93 & 5102.32 ± 5.80 \\
        Hopper & 3554.35 ± 165.7 & 3573.74 ± 12.98 & 3521.14 ± 36.81 & 3586.24 ± 8.78 \\
        Reacher & -7.98 ± 2.66 & -9.81 ± 3.23 & -10.56 ± 5.61 & -8.61 ± 1.46 \\
        Ant & 3941.69 ± 944.67 & 4173.97 ± 213.46 & 3268.29 ± 1301.12 & 4239.40 ± 42.81 \\
        Walker2d & 6799.93 ± 387.85 & 5274.72 ± 981.72 & 4558.99 ± 302.64 & 5912.83 ± 146.58 \\
        \bottomrule
    \end{tabular}
    \caption{Return for each environment on various GAIL algorithms.}
    \label{tab:my_label}
\end{table}



\begin{table}[h]
    \centering
    \begin{tabular}{c|c|c|c|c}
        \toprule
        & BC+GAIL & WAIL & GAIL-DAC & Ours \\
        \hline
        Expert Trajectories & 200 & 4 & 4 & 4 \\
        \hline
        Half-Cheetah & 0.73M & 15.24M & 0.13M & 0.08M \\
        Hopper & 30.36M & 74.11M & 0.51M & 0.06M \\
        Reacher & 0.68M & 10.84M & 0.02M & 0.01M \\
        Ant & 0.52M & 98.50M & 0.92M & 0.34M \\
        Walker2d & 1.26M & 79.48M & 0.43M & 0.41M \\
        \bottomrule
    \end{tabular}
    \caption{Number of Iterations needed to reach 95\% of return}
    \label{tab:my_label}
\end{table}

\begin{table}[h]
    \centering
    \begin{tabular}{c|c|c|c}
    \toprule
        & Expert & DiffAIL & C-DiffAIL \\
        \hline
    Hopper & 3402 & $3382.03 \pm 142.86$ & $3388.28 \pm 41.23$ \\
    HalfCheetah & 4463 & $5362.25 \pm 96.92$ & $4837.34 \pm 30.58$ \\
    Ant & 4228 & $5142.60 \pm 90.05$ & $4206.64 \pm 36.52$ \\
    Walker2d & 6717 & $6292.28 \pm 97.65$ & $6343.89 \pm 33.67$ \\
    \hline
    \end{tabular}
    \caption{Final reward with 1 trajectory in diffusion-based GAIL [2], both vanilla and controlled variant. Mean and standard deviation over five runs}
    \label{tab:transposed_diffail}
\end{table}

\newpage
\section{Comparison in Atari tasks}
\begin{table}[h]
    \centering
    \begin{tabular}{c|c|c|c}
    \toprule
        & Expert (PPO) & GAIL & C-GAIL \\
        \hline
    BeamRider & $2637.45 \pm 1378.23$ & $1087.60 \pm 559.09$ & $1835.27 \pm 381.84$ \\
    Pong & $21.32 \pm 0.0$ & $-1.73 \pm 18.13$ & $0.34 \pm 8.93$ \\
    Q*bert & $598.73 \pm 127.07$ & $-7.27 \pm 24.95$ & $428.87 \pm 12.72$ \\
    Seaquest & $1840.26 \pm 0.0$ & $1474.04 \pm 201.62$ & $1389.47 \pm 80.24$ \\
    Hero & $27814.14 \pm 46.01$ & $13942.51 \pm 67.13$ & $23912.73 \pm 32.69$ \\
    \hline
    \end{tabular}
    \caption{Final reward in five Atari tasks. Mean and standard deviation over ten runs}
    \label{tab:transposed_table}
\end{table}

\newpage
\section*{NeurIPS Paper Checklist}

\begin{enumerate}

\item {\bf Claims}
    \item[] Question: Do the main claims made in the abstract and introduction accurately reflect the paper's contributions and scope?
    \item[] Answer: \answerYes{} 
    \item[] Justification: We state our contribution and scope clearly in the abstract and introduction. Overall, we asymptotically stabilize the training process of GAIL and achieve a faster rate of convergence, smaller range of oscillation and match the expert distribution more closely.
    \item[] Guidelines:
    \begin{itemize}
        \item The answer NA means that the abstract and introduction do not include the claims made in the paper.
        \item The abstract and/or introduction should clearly state the claims made, including the contributions made in the paper and important assumptions and limitations. A No or NA answer to this question will not be perceived well by the reviewers. 
        \item The claims made should match theoretical and experimental results, and reflect how much the results can be expected to generalize to other settings. 
        \item It is fine to include aspirational goals as motivation as long as it is clear that these goals are not attained by the paper. 
    \end{itemize}

\item {\bf Limitations}
    \item[] Question: Does the paper discuss the limitations of the work performed by the authors?
    \item[] Answer: \answerYes{}{} 
    \item[] Justification: We discuss the limitations of our paper in Section \ref{discussion}.
    \item[] Guidelines:
    \begin{itemize}
        \item The answer NA means that the paper has no limitation while the answer No means that the paper has limitations, but those are not discussed in the paper. 
        \item The authors are encouraged to create a separate "Limitations" section in their paper.
        \item The paper should point out any strong assumptions and how robust the results are to violations of these assumptions (e.g., independence assumptions, noiseless settings, model well-specification, asymptotic approximations only holding locally). The authors should reflect on how these assumptions might be violated in practice and what the implications would be.
        \item The authors should reflect on the scope of the claims made, e.g., if the approach was only tested on a few datasets or with a few runs. In general, empirical results often depend on implicit assumptions, which should be articulated.
        \item The authors should reflect on the factors that influence the performance of the approach. For example, a facial recognition algorithm may perform poorly when image resolution is low or images are taken in low lighting. Or a speech-to-text system might not be used reliably to provide closed captions for online lectures because it fails to handle technical jargon.
        \item The authors should discuss the computational efficiency of the proposed algorithms and how they scale with dataset size.
        \item If applicable, the authors should discuss possible limitations of their approach to address problems of privacy and fairness.
        \item While the authors might fear that complete honesty about limitations might be used by reviewers as grounds for rejection, a worse outcome might be that reviewers discover limitations that aren't acknowledged in the paper. The authors should use their best judgment and recognize that individual actions in favor of transparency play an important role in developing norms that preserve the integrity of the community. Reviewers will be specifically instructed to not penalize honesty concerning limitations.
    \end{itemize}

\item {\bf Theory Assumptions and Proofs}
    \item[] Question: For each theoretical result, does the paper provide the full set of assumptions and a complete (and correct) proof?
    \item[] Answer: \answerYes{} 
    \item[] Justification: We provide our assumptions in Section \ref{preliminary} and Assumption \ref{assumption}. Our complete proofs are provided in Section \ref{sec:proofs} and \ref{appB}.
    \item[] Guidelines:
    \begin{itemize}
        \item The answer NA means that the paper does not include theoretical results. 
        \item All the theorems, formulas, and proofs in the paper should be numbered and cross-referenced.
        \item All assumptions should be clearly stated or referenced in the statement of any theorems.
        \item The proofs can either appear in the main paper or the supplemental material, but if they appear in the supplemental material, the authors are encouraged to provide a short proof sketch to provide intuition. 
        \item Inversely, any informal proof provided in the core of the paper should be complemented by formal proofs provided in appendix or supplemental material.
        \item Theorems and Lemmas that the proof relies upon should be properly referenced. 
    \end{itemize}

    \item {\bf Experimental Result Reproducibility}
    \item[] Question: Does the paper fully disclose all the information needed to reproduce the main experimental results of the paper to the extent that it affects the main claims and/or conclusions of the paper (regardless of whether the code and data are provided or not)?
    \item[] Answer: \answerYes{} 
    \item[] Justification: We include the implementation of our method in Algorithm \ref{alg:cap}. Additionally, we submit an additional zip file to reproduce our experimental results. 
    \item[] Guidelines:
    \begin{itemize}
        \item The answer NA means that the paper does not include experiments.
        \item If the paper includes experiments, a No answer to this question will not be perceived well by the reviewers: Making the paper reproducible is important, regardless of whether the code and data are provided or not.
        \item If the contribution is a dataset and/or model, the authors should describe the steps taken to make their results reproducible or verifiable. 
        \item Depending on the contribution, reproducibility can be accomplished in various ways. For example, if the contribution is a novel architecture, describing the architecture fully might suffice, or if the contribution is a specific model and empirical evaluation, it may be necessary to either make it possible for others to replicate the model with the same dataset, or provide access to the model. In general. releasing code and data is often one good way to accomplish this, but reproducibility can also be provided via detailed instructions for how to replicate the results, access to a hosted model (e.g., in the case of a large language model), releasing of a model checkpoint, or other means that are appropriate to the research performed.
        \item While NeurIPS does not require releasing code, the conference does require all submissions to provide some reasonable avenue for reproducibility, which may depend on the nature of the contribution. For example
        \begin{enumerate}
            \item If the contribution is primarily a new algorithm, the paper should make it clear how to reproduce that algorithm.
            \item If the contribution is primarily a new model architecture, the paper should describe the architecture clearly and fully.
            \item If the contribution is a new model (e.g., a large language model), then there should either be a way to access this model for reproducing the results or a way to reproduce the model (e.g., with an open-source dataset or instructions for how to construct the dataset).
            \item We recognize that reproducibility may be tricky in some cases, in which case authors are welcome to describe the particular way they provide for reproducibility. In the case of closed-source models, it may be that access to the model is limited in some way (e.g., to registered users), but it should be possible for other researchers to have some path to reproducing or verifying the results.
        \end{enumerate}
    \end{itemize}

\item {\bf Open access to data and code}
    \item[] Question: Does the paper provide open access to the data and code, with sufficient instructions to faithfully reproduce the main experimental results, as described in supplemental material?
    \item[] Answer: \answerYes{} 
    \item[] Justification: We submit an additional zip file to reproduce our experimental results. 
    \item[] Guidelines:
    \begin{itemize}
        \item The answer NA means that paper does not include experiments requiring code.
        \item Please see the NeurIPS code and data submission guidelines (\url{https://nips.cc/public/guides/CodeSubmissionPolicy}) for more details.
        \item While we encourage the release of code and data, we understand that this might not be possible, so “No” is an acceptable answer. Papers cannot be rejected simply for not including code, unless this is central to the contribution (e.g., for a new open-source benchmark).
        \item The instructions should contain the exact command and environment needed to run to reproduce the results. See the NeurIPS code and data submission guidelines (\url{https://nips.cc/public/guides/CodeSubmissionPolicy}) for more details.
        \item The authors should provide instructions on data access and preparation, including how to access the raw data, preprocessed data, intermediate data, and generated data, etc.
        \item The authors should provide scripts to reproduce all experimental results for the new proposed method and baselines. If only a subset of experiments are reproducible, they should state which ones are omitted from the script and why.
        \item At submission time, to preserve anonymity, the authors should release anonymized versions (if applicable).
        \item Providing as much information as possible in supplemental material (appended to the paper) is recommended, but including URLs to data and code is permitted.
    \end{itemize}

\item {\bf Experimental Setting/Details}
    \item[] Question: Does the paper specify all the training and test details (e.g., data splits, hyperparameters, how they were chosen, type of optimizer, etc.) necessary to understand the results?
    \item[] Answer: \answerYes{} 
    \item[] Justification: We specify our settings in Section \ref{settings}, and we provide an ablation study on hyperparameters in Section \ref{appD}.
    \item[] Guidelines:
    \begin{itemize}
        \item The answer NA means that the paper does not include experiments.
        \item The experimental setting should be presented in the core of the paper to a level of detail that is necessary to appreciate the results and make sense of them.
        \item The full details can be provided either with the code, in appendix, or as supplemental material.
    \end{itemize}

\item {\bf Experiment Statistical Significance}
    \item[] Question: Does the paper report error bars suitably and correctly defined or other appropriate information about the statistical significance of the experiments?
    \item[] Answer: \answerYes{} 
    \item[] Justification: We provide the error bars in Figure \ref{fig:dac-return} and \ref{fig:dac-converge}. The methodology for their calculation is included in their respective captions and Section \ref{settings}.
    \item[] Guidelines:
    \begin{itemize}
        \item The answer NA means that the paper does not include experiments.
        \item The authors should answer "Yes" if the results are accompanied by error bars, confidence intervals, or statistical significance tests, at least for the experiments that support the main claims of the paper.
        \item The factors of variability that the error bars are capturing should be clearly stated (for example, train/test split, initialization, random drawing of some parameter, or overall run with given experimental conditions).
        \item The method for calculating the error bars should be explained (closed form formula, call to a library function, bootstrap, etc.)
        \item The assumptions made should be given (e.g., Normally distributed errors).
        \item It should be clear whether the error bar is the standard deviation or the standard error of the mean.
        \item It is OK to report 1-sigma error bars, but one should state it. The authors should preferably report a 2-sigma error bar than state that they have a 96\% CI, if the hypothesis of Normality of errors is not verified.
        \item For asymmetric distributions, the authors should be careful not to show in tables or figures symmetric error bars that would yield results that are out of range (e.g. negative error rates).
        \item If error bars are reported in tables or plots, The authors should explain in the text how they were calculated and reference the corresponding figures or tables in the text.
    \end{itemize}

\item {\bf Experiments Compute Resources}
    \item[] Question: For each experiment, does the paper provide sufficient information on the computer resources (type of compute workers, memory, time of execution) needed to reproduce the experiments?
    \item[] Answer: \answerYes{} 
    \item[] Justification: We provide information on computer resources in Section \ref{settings}. 
    \item[] Guidelines:
    \begin{itemize}
        \item The answer NA means that the paper does not include experiments.
        \item The paper should indicate the type of compute workers CPU or GPU, internal cluster, or cloud provider, including relevant memory and storage.
        \item The paper should provide the amount of compute required for each of the individual experimental runs as well as estimate the total compute. 
        \item The paper should disclose whether the full research project required more compute than the experiments reported in the paper (e.g., preliminary or failed experiments that didn't make it into the paper). 
    \end{itemize}
    
\item {\bf Code Of Ethics}
    \item[] Question: Does the research conducted in the paper conform, in every respect, with the NeurIPS Code of Ethics \url{https://neurips.cc/public/EthicsGuidelines}?
    \item[] Answer: \answerYes{} 
    \item[] Justification:  Our research comforms with the NeurIPS Code of Ethics. 
    \item[] Guidelines:
    \begin{itemize}
        \item The answer NA means that the authors have not reviewed the NeurIPS Code of Ethics.
        \item If the authors answer No, they should explain the special circumstances that require a deviation from the Code of Ethics.
        \item The authors should make sure to preserve anonymity (e.g., if there is a special consideration due to laws or regulations in their jurisdiction).
    \end{itemize}

\item {\bf Broader Impacts}
    \item[] Question: Does the paper discuss both potential positive societal impacts and negative societal impacts of the work performed?
    \item[] Answer: \answerYes{} 
    \item[] Justification: We discuss broader impacts of our work in Appendix \ref{sec:impact}.
    \item[] Guidelines:
    \begin{itemize}
        \item The answer NA means that there is no societal impact of the work performed.
        \item If the authors answer NA or No, they should explain why their work has no societal impact or why the paper does not address societal impact.
        \item Examples of negative societal impacts include potential malicious or unintended uses (e.g., disinformation, generating fake profiles, surveillance), fairness considerations (e.g., deployment of technologies that could make decisions that unfairly impact specific groups), privacy considerations, and security considerations.
        \item The conference expects that many papers will be foundational research and not tied to particular applications, let alone deployments. However, if there is a direct path to any negative applications, the authors should point it out. For example, it is legitimate to point out that an improvement in the quality of generative models could be used to generate deepfakes for disinformation. On the other hand, it is not needed to point out that a generic algorithm for optimizing neural networks could enable people to train models that generate Deepfakes faster.
        \item The authors should consider possible harms that could arise when the technology is being used as intended and functioning correctly, harms that could arise when the technology is being used as intended but gives incorrect results, and harms following from (intentional or unintentional) misuse of the technology.
        \item If there are negative societal impacts, the authors could also discuss possible mitigation strategies (e.g., gated release of models, providing defenses in addition to attacks, mechanisms for monitoring misuse, mechanisms to monitor how a system learns from feedback over time, improving the efficiency and accessibility of ML).
    \end{itemize}
    
\item {\bf Safeguards}
    \item[] Question: Does the paper describe safeguards that have been put in place for responsible release of data or models that have a high risk for misuse (e.g., pretrained language models, image generators, or scraped datasets)?
    \item[] Answer: \answerNA{} 
    \item[] Justification: We do not think our paper poses such risks. 
    \item[] Guidelines:
    \begin{itemize}
        \item The answer NA means that the paper poses no such risks.
        \item Released models that have a high risk for misuse or dual-use should be released with necessary safeguards to allow for controlled use of the model, for example by requiring that users adhere to usage guidelines or restrictions to access the model or implementing safety filters. 
        \item Datasets that have been scraped from the Internet could pose safety risks. The authors should describe how they avoided releasing unsafe images.
        \item We recognize that providing effective safeguards is challenging, and many papers do not require this, but we encourage authors to take this into account and make a best faith effort.
    \end{itemize}

\item {\bf Licenses for existing assets}
    \item[] Question: Are the creators or original owners of assets (e.g., code, data, models), used in the paper, properly credited and are the license and terms of use explicitly mentioned and properly respected?
    \item[] Answer: \answerYes{} 
    \item[] Justification: We properly cite all the original papers we used.
    \item[] Guidelines:
    \begin{itemize}
        \item The answer NA means that the paper does not use existing assets.
        \item The authors should cite the original paper that produced the code package or dataset.
        \item The authors should state which version of the asset is used and, if possible, include a URL.
        \item The name of the license (e.g., CC-BY 4.0) should be included for each asset.
        \item For scraped data from a particular source (e.g., website), the copyright and terms of service of that source should be provided.
        \item If assets are released, the license, copyright information, and terms of use in the package should be provided. For popular datasets, \url{paperswithcode.com/datasets} has curated licenses for some datasets. Their licensing guide can help determine the license of a dataset.
        \item For existing datasets that are re-packaged, both the original license and the license of the derived asset (if it has changed) should be provided.
        \item If this information is not available online, the authors are encouraged to reach out to the asset's creators.
    \end{itemize}

\item {\bf New Assets}
    \item[] Question: Are new assets introduced in the paper well documented and is the documentation provided alongside the assets?
    \item[] Answer: \answerNA{} 
    \item[] Justification: This paper does not release new assets.
    \item[] Guidelines:
    \begin{itemize}
        \item The answer NA means that the paper does not release new assets.
        \item Researchers should communicate the details of the dataset/code/model as part of their submissions via structured templates. This includes details about training, license, limitations, etc. 
        \item The paper should discuss whether and how consent was obtained from people whose asset is used.
        \item At submission time, remember to anonymize your assets (if applicable). You can either create an anonymized URL or include an anonymized zip file.
    \end{itemize}

\item {\bf Crowdsourcing and Research with Human Subjects}
    \item[] Question: For crowdsourcing experiments and research with human subjects, does the paper include the full text of instructions given to participants and screenshots, if applicable, as well as details about compensation (if any)? 
    \item[] Answer: \answerNA{} 
    \item[] Justification: This paper does not involve crowdsourcing nor research with human subjects.
    \item[] Guidelines:
    \begin{itemize}
        \item The answer NA means that the paper does not involve crowdsourcing nor research with human subjects.
        \item Including this information in the supplemental material is fine, but if the main contribution of the paper involves human subjects, then as much detail as possible should be included in the main paper. 
        \item According to the NeurIPS Code of Ethics, workers involved in data collection, curation, or other labor should be paid at least the minimum wage in the country of the data collector. 
    \end{itemize}

\item {\bf Institutional Review Board (IRB) Approvals or Equivalent for Research with Human Subjects}
    \item[] Question: Does the paper describe potential risks incurred by study participants, whether such risks were disclosed to the subjects, and whether Institutional Review Board (IRB) approvals (or an equivalent approval/review based on the requirements of your country or institution) were obtained?
    \item[] Answer: \answerNA{} 
    \item[] Justification: This paper does not involve crowdsourcing nor research with human subjects.
    \item[] Guidelines:
    \begin{itemize}
        \item The answer NA means that the paper does not involve crowdsourcing nor research with human subjects.
        \item Depending on the country in which research is conducted, IRB approval (or equivalent) may be required for any human subjects research. If you obtained IRB approval, you should clearly state this in the paper. 
        \item We recognize that the procedures for this may vary significantly between institutions and locations, and we expect authors to adhere to the NeurIPS Code of Ethics and the guidelines for their institution. 
        \item For initial submissions, do not include any information that would break anonymity (if applicable), such as the institution conducting the review.
    \end{itemize}

\end{enumerate}

\end{document}